\documentclass{article}

\usepackage[preprint]{neurips_2026}
\workshoptitle{Long-Context Foundation Models}

\usepackage{natbib}
\usepackage[utf8]{inputenc} 
\usepackage[T1]{fontenc}    
\usepackage{hyperref}       
\usepackage{url}            
\usepackage{booktabs}       
\usepackage{amsfonts}       
\usepackage{amsthm}         
\usepackage{nicefrac}       
\usepackage{microtype}      
\usepackage{xcolor}         

\usepackage{amsmath}
\usepackage{amssymb}
\usepackage{graphicx}
\newtheorem{proposition}{Proposition}
\newtheorem{lemma}{Lemma}
\newtheorem{corollary}{Corollary}

\title{Underscoring the Problem: Why Softpick Fails at Initialization}

\author{%
  Aryan Sood \\
  IIT Roorkee \\
  \texttt{aryan\_s2@ee.iitr.ac.in}
  \And
  Jaikaran Singh \\
  IIT Roorkee \\
  \texttt{jaikaran\_s@cs.iitr.ac.in}
  \And
  Ishaan Bansal \\
  IIT Roorkee \\
  \texttt{ishaan\_b@ece.iitr.ac.in}
}

\begin{document}

\maketitle

\begin{abstract}
Softmax attention gives every token a nonzero weight, which in trained models concentrates into attention sinks and massive activations that widen the dynamic range low-precision inference must cover. Softpick removes this constraint by rectifying scores, eliminating sinks and lowering hidden-state kurtosis, but its advantage fades at scale, a failure the original work attributes to underscoring and dying heads. We reframe this as a normalization problem. Softpick's denominator splits into positive- and negative-shifted sums $D^+$ and $D^-$, used identically in the forward and backward pass, preventing their roles from being isolated. We separate them into a family of operators that independently choose each denominator. The failure originates at initialization: every layer contains rows where $D^+$ is exactly zero, while near-dead rows produce gradient norms above $10^{12}$ regardless of the backward denominator. Only softpick and a stop-gradient variant, which keeps $D^++D^-$ forward but backpropagates through $D^+$ alone, train from scratch. At 230M parameters the stop-gradient operator matches softpick on quantization, has fewer dead heads, and retrieves passkeys more reliably, trailing only on peak attention-weight kurtosis.
\end{abstract}

\section{Introduction}
\label{sec:introduction}

Large language models are built on attention, where normalization determines how a query distributes weight across the tokens it can see. Since the original transformer \citep{vaswani2017attention}, softmax has been the default, turning raw scores into a distribution that sums to one. That constraint is also its weakness: because every key receives some weight, trained models can concentrate attention on a few content-free tokens, the attention sink \citep{xiao2024efficient,gu2025attention}, and develop massive activations, a handful of hidden-state channels whose magnitude dwarfs the rest of the tensor \citep{sun2024massive}. Both widen the dynamic range that low-precision inference must represent, creating an obstacle for quantized deployment.

Softpick removes this constraint by rectifying a shifted exponential, allowing irrelevant keys to receive an exact zero \citep{zuhri2025softpick}. For scores $x\in\mathbb{R}^N$ and $m=\max_j x_j$,
\begin{equation}
\mathrm{Softpick}(x)_i = \frac{\mathrm{ReLU}(e^{x_i-m}-e^{-m})}{\sum_{j=1}^N \left|e^{x_j-m}-e^{-m}\right| + \epsilon}, \qquad 0 < \epsilon \ll 1.
\label{eq:softpick}
\end{equation}
At small scale this delivers on its promise: sinks disappear, hidden-state kurtosis drops, and low-bit quantization stays close to softmax at comparable training loss. However this does not hold as models grow. \citet{zuhri2025softpick} report that the advantage narrows with scale, a failure they name underscoring: as context lengthens, the negative-shifted part of the denominator grows while the positive part stays roughly constant, diluting attention on relevant keys. Dormant heads are their separate account for the rectified variant.

We instead study the normalization itself. The denominator splits into a positive-shifted sum $D^+$ and a negative-shifted sum $D^-$, so Equation~\ref{eq:softpick}'s denominator is exactly $D^++D^-$ (Section~\ref{sec:problem-formulation}). Softpick uses this sum in both the forward and backward passes, coupling two roles that can fail independently: forward normalization and the gradient through that normalization. We separate these roles by choosing the forward and backward denominator independently, yielding a small family of operators spanning softpick, the fully rectified variant, and three intermediate designs. This decomposition lets us distinguish forward degeneracy from backward effects and identify where the training failure originates. Our contributions are as follows.
\begin{itemize}
\item We define a family of operators over (forward, backward) denominators, and identify the \textbf{stop-gradient operator} ($D^++D^-$ forward, $D^+$ backward) as the only member besides softpick that trains from scratch.
\item We locate the rectified-forward failure at initialization, not training dynamics: layers contain $D^+=0$ rows, while near-dead neighbors produce gradient norms above $10^{12}$, and no floor small enough to preserve rectification restores stability.
\item We compare the stop-gradient operator to softpick at 130M and 230M: most of its cost closes by 230M, where it carries fewer dead heads and retrieves passkeys more reliably, leaving only peak attention-weight kurtosis.
\end{itemize}
\section{Background}
\label{sec:background}

Two concentration phenomena in trained transformers motivate alternatives to softmax attention. \citet{xiao2024efficient} named the attention sink, a small set of tokens that absorb disproportionate attention regardless of content, and \citet{gu2025attention} found the same pattern across most sufficiently trained models. \citet{sun2024massive} identified massive activations, a few hidden-state channels that reach magnitudes far outside the rest of the tensor and force quantization to spend extra precision on outliers \citep{dettmers2022llmint8,xiao2023smoothquant,frantar2023gptq}. \citet{bondarenko2023quantizable} show that these outliers arise partly because attention heads have no cheap way to attend to nothing, while \citet{queipo2026attention} connect sinks and massive activations to the same compression of the residual stream. Softpick \citep{zuhri2025softpick} addresses this cluster with a rectified attention function that needs no sink token and closes most of the gap to softmax at small scale. That gap reopens with scale, and the original work attributes the failure to dying attention heads. We use Softpick as a case study for separating the forward and backward roles of its normalization. A broader survey of attention alternatives is deferred to Appendix~\ref{app:related}.
\section{Problem Formulation}
\label{sec:problem-formulation}

Softpick's denominator plays two roles: it determines the forward normalization of the surviving scores and contributes to the gradient with respect to those scores. Because the same expression controls both, a training failure could arise from the forward normalization, the backward path, or their interaction. We therefore separate these roles.

\paragraph{Splitting the denominator.} Every term $e^{x_i}-1$ keeps the sign of $x_i$, and the max-shift rescaling for numerical stability preserves it (Appendix~\ref{app:derivation}), so each key falls into one of two groups:
\begin{equation}
D^+ = \sum_{j\,:\,x_j \ge 0} \mathrm{ReLU}\!\left(e^{x_j-m}-e^{-m}\right), \qquad
D^- = \sum_{j\,:\,x_j < 0} \left|e^{x_j-m}-e^{-m}\right|,
\label{eq:dplusminus}
\end{equation}
so Equation~\ref{eq:softpick}'s denominator is exactly $D^++D^-$. The numerator depends only on the $D^+$ group, while $D^-$ enters through the denominator and therefore controls the scale of the surviving weights.

\paragraph{The operator family.} Once $D^+$ and $D^-$ are separated, the forward and backward denominators need not be identical. We set the forward denominator $D_f$ and backward denominator $D_b$ independently using a straight-through construction \citep{bengio2013estimating}: the forward value uses $D_f$, while its gradient is computed as though the denominator were $D_b$ (Appendix~\ref{app:implementation}). Table~\ref{tab:operators} lists the five members. Softpick couples $D_f=D_b=D^++D^-$, while the \emph{stop-gradient} operator keeps $D_f=D^++D^-$ but sets $D_b=D^+$, removing the gradient contribution through $D^-$. The remaining operators vary the forward denominator or interpolate between the two components.

\begin{table}[t]
\centering
\small
\caption{The operator family. Each row sets a forward and a backward denominator independently.}
\label{tab:operators}
\begin{tabular}{lllc}
\toprule
Operator & Forward $D_f$ & Backward $D_b$ & Role \\
\midrule
Softpick & $D^++D^-$ & $D^++D^-$ & Coupled baseline \\
Stop-gradient & $D^++D^-$ & $D^+$ (stop-grad on $D^-$) & The survivor \\
Rectified-forward & $D^+$ & $D^++D^-$ & Honest gradient, rectified forward \\
Partial ($\lambda$) & $D^++\lambda D^-$ & matched & $\lambda\in\{0.11,0.3\}$ \\
Floored & $\max(D^+,\delta)$ & $D^++D^-$ & Hard forward floor \\
\bottomrule
\end{tabular}
\end{table}

Inside the forward pass, $D^-$ has two distinct effects: it sets the score scale relevant to long-context retrieval, since a large $D^-$ reduces the weight assigned to positively-scoring keys as negative keys accumulate (underscoring), and it prevents the denominator from approaching zero when $D^+$ is small, since $D^->0$ whenever any key scores negatively. Section~\ref{sec:experiments} tests the first effect by inference-time denominator swaps and the second by training the $D^+$-forward operators from scratch.

\paragraph{A structural singularity at initialization.} A query row is \emph{zero-support} if every key has $x_j<0$, giving $D^+=0$. Softpick's numerator is then exactly zero for every key, and so is its Jacobian, regardless of the backward denominator (Appendix~\ref{app:zerosupport}, Lemma~\ref{lem:dead-row}). Thus the dead-Jacobian condition is a row-level property rather than a head-level one: a head becomes progressively inactive as more of its rows become zero-support, while an exactly zero-support row cannot be revived by changing the backward denominator because its numerator has no gradient.

\begin{proposition}[Zero-support rows at initialization]
\label{prop:zero-support}
At initialization, attention scores are centered with no bias toward positive values row by row, so zero-support rows are expected in every layer, not only at the query's first position. Section~\ref{sec:experiments} confirms this: across five sampled batches, every one of 12 layers contains an exact $D^+=0$ row, with gradient norms between $10^{12}$ and $10^{14}$.
\end{proposition}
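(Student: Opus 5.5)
The proposition has two parts: a probabilistic claim that zero-support rows are expected in every layer, and an empirical confirmation deferred to Section~\ref{sec:experiments}. I would prove only the first part formally. The second I would treat as a measurement to report, not to derive.

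\textbf{Setup and symmetry.} Fix a layer at initialization. The query and key projections $W_Q, W_K$ have i.i.d.\ zero-mean entries, drawn from a symmetric distribution such as a centered Gaussian. The inputs are the layer-normalized hidden states $h_1,\dots,h_T$, which are independent of $(W_Q,W_K)$. For query position $t$, the causal row is $x_j = \langle W_Q h_t, W_K h_j\rangle/\sqrt{d}$ for $j\le t$. The map $W_K \mapsto -W_K$ preserves the law of the weights and sends $x\mapsto -x$, so conditional on the inputs each row score vector is symmetric in law. Hence $\Pr(x_j<0)=1/2$ for each $j$, and $\Pr(D^+=0)=\Pr(x_j<0\ \forall j\le t)=\Pr(\text{all } x_j>0)$. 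Ties have probability zero because the weights are continuous.

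\textbf{Lower-bounding the zero-support probability.} Condition on $q=W_Q h_t$. The scores are then jointly Gaussian: $x_j = \langle q, W_K h_j\rangle/\sqrt d$, with covariance $\propto \|q\|^2\langle h_i,h_j\rangle$. At initialization the hidden states within a sequence are strongly positively correlated: they share the embedding mean and positional structure, and LayerNorm keeps $\|h_j\|$ roughly constant. So the $x_j$ are positively correlated Gaussians. Writing $x_j = \sigma(\sqrt{\rho}\,Z + \sqrt{1-\rho}\,Z_j)$, the event ``all negative'' is implied by $Z$ sufficiently negative. This gives $\Pr(D^+=0)\ge c(\rho,t)>0$, a quantity that decays in $t$ much more slowly than the independent-case value $2^{-t}$. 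Summing over the $T$ query positions, the $B$ sequences and the $H$ heads, the expected number of zero-support rows per layer is at least $BH\sum_t c(\rho,t)$, which is $\gg 1$ at realistic sizes.

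Three points then deliver the statement. First, short rows, and $t=1$ in particular, contribute with probability $1/2$ each. Second, the bound uses only the symmetry argument and the correlation structure, and both hold identically in every layer, so zero-support rows are not confined to the first position or to the first layer. Third, since every row has $\Pr(D^+=0)>0$, a Borel--Cantelli-style count over the batch makes each layer contain such rows with high probability.

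\textbf{Main obstacle.} The hard step is making the correlation assumption on $\langle h_i,h_j\rangle$ rigorous for deeper layers. There, $h_j$ depends on earlier random weights, so independence from the current $W_Q,W_K$ holds, but $\rho$ is not explicit. I would first avoid quantifying $\rho$ at all: symmetry alone gives $\Pr(D^+=0)\ge 0$ trivially, while the $t=1$ rows give exactly $1/2$. The $t=1$ rows alone suffice for the claim ``in every layer'' once $BH\ge 1$ is large. Extending the claim beyond the first position needs $\rho$ bounded away from $0$, and I would take that bound from the empirical measurement.

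The gradient-norm figures ($10^{12}$--$10^{14}$) come from near-dead neighbors. There $D^+=\varepsilon$ is tiny, and the $D^+$-backward Jacobian scales like $1/D^+$. This follows by differentiating $\mathrm{ReLU}(\cdot)/D_b$ as in Lemma~\ref{lem:dead-row}, and I would note it only as a consistency check against the experiments.
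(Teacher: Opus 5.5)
The gap is in the part of the statement that carries the content: ``not only at the query's first position.'' The only lower bound you actually establish comes from the $t=1$ rows. In your obstacle paragraph you then defer every $t\ge 2$ to the empirical measurement, on the grounds that it ``needs $\rho$ bounded away from $0$.'' That premise is backwards, and this is exactly the step the paper's supporting argument (Proposition~\ref{prop:count}) supplies. The independent symmetric model \emph{is} the $\rho=0$ case, and it already gives $\Pr(D^+=0)=2^{-t}$ at causal position $t$. Per sequence and per head, that is $\tfrac{1}{4}$ expected zero-support rows at $t=2$, $\tfrac{1}{8}$ at $t=3$, and $\tfrac{1}{2}-2^{-L}$ in total beyond the first position. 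So each layer carries about $H/2$ such rows per sequence. Positive correlation would only raise these numbers; it is not needed.

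Your own setup closes the gap without the correlation model. Conditional on the layer inputs and $q$, the pair $(x_1,x_2)$ of a $t=2$ row is a centered bivariate Gaussian with some correlation $r$. Hence $\Pr(D^+=0)=\tfrac{1}{4}+\arcsin(r)/(2\pi)$, which is positive except on a null event and equals $\tfrac{1}{4}$ when $h_1\perp h_2$. Distinct heads use disjoint, independent slices of $W_Q,W_K$, so their rows are conditionally independent given the inputs. The probability that a layer has no such row is therefore a product over heads. That should replace your ``Borel--Cantelli-style'' step, which does not go through as written because rows of one head share $W_K$ across the whole batch.

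The correlation premise you do lean on is also unavailable in this architecture. At layer~1 of this RoPE, LLaMA-style stack there is no absolute positional embedding. With the standard i.i.d.\ zero-mean embedding initialization, normalized hidden states of distinct tokens are nearly orthogonal, so no shared ``embedding mean'' or positional structure makes $\rho$ large. Your claim that the correlation structure ``holds identically in every layer'' also contradicts your own admission that $\rho$ is not explicit in deeper layers. Worth keeping is your sign-flip argument: $W_K\mapsto -W_K$ gives exact joint symmetry of each row conditional on the inputs, a cleaner justification than the paper's independence idealization.

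Separately, your gradient remark reverses the paper's mechanism. The $10^{12}$--$10^{14}$ norms are measured for the rectified-forward operator, whose \emph{forward} denominator is $D^+$ and whose backward is $D^++D^-$ (Corollary~\ref{cor:discontinuity}). A $D^+$ backward paired with the full forward is the stop-gradient operator, whose gradient Lemma~\ref{lem:bounded} bounds by $O(1/D_f)$. Lemma~\ref{lem:dead-row} describes a vanishing Jacobian, not an exploding one.
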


On such a row, a $D^+$-only forward denominator gives $0/\epsilon$ and a vanishing Jacobian, while nearby rows can diverge with gradients controlled by $\epsilon$ rather than the data (Appendix~\ref{app:zerosupport}). We therefore test whether removing $D^-$ from the forward denominator is sufficient for stable training, or whether the resulting singularity prevents it.
\section{Experiments}
\label{sec:experiments}

We test the decomposition in three steps: whether $D^-$ has a forward role, whether removing it during training is reachable from scratch, and what cost the stable stop-gradient operator incurs.

\begin{figure}[b]
\centering
\begin{minipage}[b]{0.62\linewidth}\centering
\includegraphics[width=\linewidth]{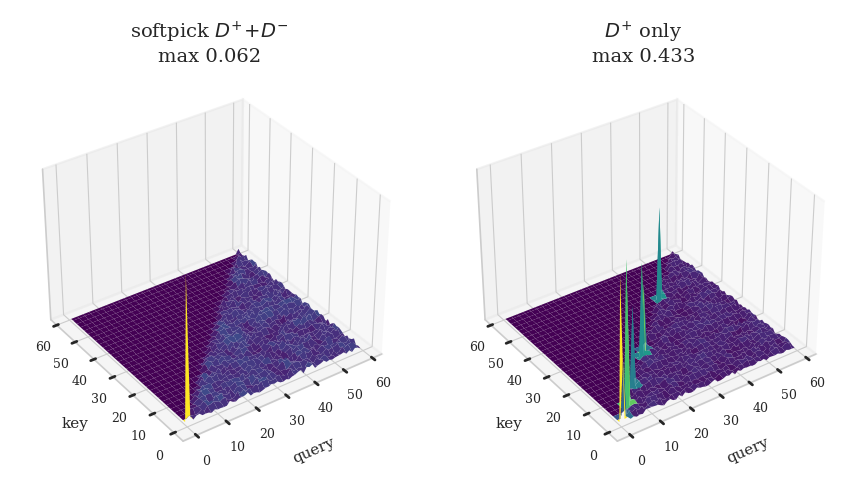}\\[-2pt]
{\footnotesize (a) attention surface, layer 8}
\end{minipage}\hfill
\begin{minipage}[b]{0.36\linewidth}\centering
\includegraphics[width=\linewidth]{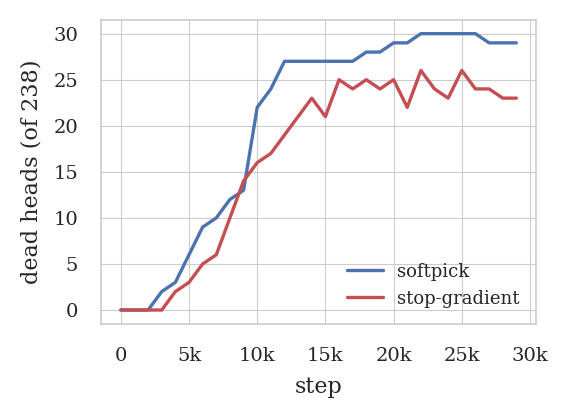}\\[-2pt]
{\footnotesize (b) dead heads, 230M}
\end{minipage}
\caption{(a) Softpick keeps one attention spike per query row (max 0.0625); the $D^+$-only forward scatters spurious spikes at a higher peak (0.4333). (b) The stop-gradient operator ends 230M training with 23 dead heads of 238 against softpick's 29.}
\label{fig:attn-surface}
\label{fig:dead-heads}
\end{figure}

\subsection{A traversable but unreachable interior}
\label{sec:exp-anneal}

Removing $D^-$ from a trained softpick model at inference breaks retrieval: it scatters attention onto the wrong keys rather than merely lowering confidence (Figure~\ref{fig:attn-surface}(a)), and rescaling the score magnitude does not repair the distribution shift (Appendix~\ref{app:inference}). We therefore anneal $D^-$ rather than remove it abruptly. Continuing training of the 340M checkpoint while annealing $c$ in $D^++cD^-$ from $1$ to $0$ over 1000 steps, with matched backward, reveals an interior operating point (Appendix~\ref{app:supp-figures}). Loss and gradient norm remain stable until $c$ falls to about 0.2, then diverge within roughly 100 steps as $c$ approaches zero. Retrieval survives at step-700 ($c=0.111$), even though the 5th percentile of $D^+$ is near zero in all 24 layers and 61 of 384 heads are already dead. Thus the remaining $D^-$ provides a denominator floor while the inherited $c=1$ checkpoint avoids the initialization degeneracy; at $c=0$, retrieval collapses.

From random initialization at 130M, the rectified-forward operator does not learn: loss stays at the random-guess 10.39 while aggregate gradient norms reach $2.9\times10^{12}$ to $5.0\times10^{14}$, and initialization confirms Proposition~\ref{prop:zero-support} (every one of 12 layers has a $D^+=0$ row beyond the first token). Raising $\epsilon$ six orders of magnitude only rescales the singularity, since zero-support rows continue to appear (Appendix~\ref{app:zerosupport}). The partial operator diverges to NaN before 1000 steps at $\lambda=0.11$ and $0.3$, in the same region the anneal crosses only from a stable checkpoint. Only softpick and the stop-gradient operator, both retaining the full $D^++D^-$ forward, train stably from scratch, past 20{,}000 steps at 130M and the full 30{,}000 at 230M.

\subsection{What the stop-gradient costs}
\label{sec:exp-cost}

Softpick and the stop-gradient operator share a forward pass and differ only in whether the backward pass sees $D^-$. At 130M the stop-gradient operator is worse on every measured axis: mean hidden-state kurtosis is 345.8 against 229.1, and its quantization deltas are higher. By 230M most of that gap closes, with hidden-state kurtosis within one percent and indistinguishable 4-bit RTN results; only peak attention-weight kurtosis remains roughly four times worse (Appendix~\ref{app:full-cost}). Dead heads move in its favor: it ends at 23 of 238 against softpick's 29 (Figure~\ref{fig:dead-heads}(b)), replacing the earlier categorical difference with a smaller but consistent reduction. Passkey retrieval separates the two most sharply. At 230M the stop-gradient operator retrieves verbatim at $\sim$1024 and $\sim$2048 tokens (100\% and 77.6\%), while softpick scores zero. Per-sample digit-level analysis of softpick's 500 generations per length confirms a genuine failure, not a scoring artifact: it emits the target digit string in $0\%$ of cases, its first generated number is never correct, and most generations contain no digits (Appendix~\ref{app:scorer}). At 130M, however, softpick retrieves well (99.8\% at $\sim$1024), so the reversal occurs between 130M and 230M, consistent with underscoring. One checkpoint per cell cannot separate this from run variance. Both collapse past their training context.

\subsection{Why the other operators fail}
\label{sec:exp-ablation}

The remaining operators fail with a small forward denominator that amplifies the gradient it routes (Appendix~\ref{app:family}). The floored operator removes the initialization singularity and begins to learn, yet never stabilizes: its 230M loss oscillates around 4 nats against the stable operators' 2.75, and its gradient norm spikes to $6\times10^{9}$. No tested floor preserving rectification escapes this amplification. The rectified-forward operator carries the full gradient and still does not train. Every failing member pairs a small $D_f$ with a $D^-$-inclusive backward, leaving forward degeneracy and backward coupling entangled. The stop-gradient operator is the only tested variant that retains the full forward denominator while training from scratch.
\section{Conclusion}
\label{sec:conclusion}

Decomposing Softpick's denominator into independent forward and backward components reveals a structural initialization failure in rectified-forward variants (Proposition~\ref{prop:zero-support}). Among the tested operators, only the stop-gradient variant, which retains $D^++D^-$ in the forward pass while backpropagating through $D^+$, trains stably from scratch. Its cost largely closes by 230M, where it carries fewer dead heads and retrieves passkeys where Softpick fails, leaving peak attention-weight kurtosis as its main measured disadvantage (Appendix~\ref{app:future}). These results motivate broader study of decoupled forward and backward normalization beyond Softpick.

\bibliographystyle{plainnat}

\bibliography{references}

\newpage
\appendix

\section{Extended related work}
\label{app:related}

Alternatives to softmax attention are a long line of work. Sparsemax \citep{martins2016from} replaces the exponential normalization with a Euclidean projection onto the probability simplex and can assign exact zero probability to low-scoring keys; $\alpha$-entmax \citep{peters2019sparse} generalizes this into a family interpolating between softmax and sparsemax, later given a learnable per-head sparsity \citep{correia2019adaptively} and a continuous form \citep{martins2020sparse}. These methods keep the sum-to-one constraint but allow zeros within it, whereas softpick and the operators studied here drop the constraint entirely, so their rows need not normalize to one. A separate line restricts which keys a query may attend to for efficiency rather than for sparsity of the weights, using fixed or strided patterns \citep{child2019generating}. Non-probabilistic replacements drop the normalization framing outright: ReLU attention \citep{wortsman2023replacing} and sigmoid attention \citep{ramapuram2025sigmoid} both remove the coupling across keys that softmax enforces. Softpick sits between these, rectifying like ReLU attention but retaining a denominator, which is precisely the object this paper decomposes. On the deployment side, the outlier and quantization literature \citep{dettmers2022llmint8,xiao2023smoothquant,frantar2023gptq,bondarenko2023quantizable} is what makes the activation-statistics axis (hidden-state kurtosis, low-bit loss deltas) a meaningful cost to track rather than a curiosity.

\section{Implementation details}
\label{app:implementation}

This section documents the code the experiments run on, so the operator family of Table~\ref{tab:operators} can be checked against a real implementation rather than only against Equation~\ref{eq:softpick}.

\paragraph{Base library and checkpoint.} All models are instances of \texttt{TransformerForCausalLM} from the \texttt{fla} (flash-linear-attention) library, with the now-standard LLaMA-style stack \citep{touvron2023llama}: RMSNorm \citep{zhang2019root}, rotary position embeddings \citep{su2021roformer}, and a gated SwiGLU MLP \citep{shazeer2020glu}. The 340M checkpoint used for the inference swaps and the anneal (Appendix~\ref{app:inference}, Section~\ref{sec:exp-anneal}) is the publicly released softpick 340M model \citep{zuhri2025softpick}: hidden size 1024, 24 layers, 16 heads (384 attention heads in total, the denominator behind every dead-head fraction we quote for this checkpoint), vocabulary 32{,}000, RoPE base 10{,}000, norm $\epsilon=10^{-6}$.

\paragraph{The straight-through decoupling.} The whole operator family is realized by one construction. The attention weights are $\mathrm{ReLU}(\cdot)/\tilde D$ with
\[
\tilde D = D_f.\mathrm{detach}() + D_b - D_b.\mathrm{detach}() + \epsilon,
\]
whose forward value is exactly $D_f$ (the middle two terms cancel numerically) while its gradient is taken as though the denominator were $D_b$. Setting $D_f=D_b=D^++D^-$ recovers softpick. The stop-gradient operator uses $D_f=D^++D^-$ and $D_b=D^+$; the rectified-forward operator uses $D_f=D^++\lambda D^-$ with $\lambda=0$ and $D_b=D^++D^-$; the floored operator uses $D_f=\max(D^+,\delta)$. This is the exact code path behind the from-scratch operator-grid runs, and it is where the $D^+$/$D^-$ split of Section~\ref{sec:problem-formulation} lives in practice.

\paragraph{Fast kernels, and an asymmetry worth noting.} We implement fast Triton kernels for the three nontrivial forward-modified operators, building on the library's parallel softpick attention kernel. The stop-gradient kernel changes only the backward sign term, from \texttt{where(s>0, delta, -delta)} to \texttt{where(s>0, delta, 0.0)}, which is precisely the removal of $D^-$'s backward contribution. The rectified-forward and floored kernels instead change the forward denominator and reuse softpick's backward kernels unchanged, because those kernels never recompute the forward denominator: they consume a saved log-sum-exp tensor, so writing the modified $D_f$ into that tensor makes the existing backward pick up the right scaling. The rectified-forward operator additionally needs a hand-written \texttt{torch.autograd.Function} so that its $D^+$-only forward can still hand back the full $D^++D^-$ gradient, since ordinary autograd through a forward that never used $D^-$ could not produce that gradient on its own. The asymmetry is worth stating: the operator that survives, the stop-gradient one, is the cheapest change to the backward pass, while the operator that fails is the one that needs custom backward machinery to exist at all.

\paragraph{Two values of $\epsilon$.} The from-scratch operator-grid runs and the initialization diagnostic use $\epsilon=10^{-6}$, matching the softpick paper's stated value and our training default. The 340M anneal continuation uses $\epsilon=10^{-8}$, the value the reference implementation actually uses, chosen there for consistency with every other experiment run against that particular checkpoint. The difference is immaterial to the conclusions but is recorded here so the reported gradient magnitudes can be reproduced exactly.

\paragraph{Passkey harness mechanics.} We use the landmark/passkey retrieval task \citep{mohtashami2023landmark}. Each prompt places a passkey integer sampled from $[1,50000)$ at a random offset inside \texttt{n\_garbage} $=\lceil 3.6\times\text{ctx\_len}\rceil$ characters of repeated filler, and asks the model to complete ``The pass key is''. The harness then calls
\begin{verbatim}
L = answer_ids.shape[-1]
gen = model.generate(..., max_new_tokens=L, do_sample=False)
is_correct = (gen[0, -L:] == answer_ids[0]).all().item()
\end{verbatim}
The model is allowed to generate exactly as many tokens as the ground-truth passkey tokenizes to, and no more. In principle this fixed budget could truncate or misalign a correct-but-differently-formatted answer before scoring, which would understate exact match. We test for exactly this on softpick's 230M zero with the per-sample digit-level analysis of Appendix~\ref{app:scorer}, using generations that run well past the passkey length, and find its zero is a genuine retrieval failure rather than a budget artifact.

\section{Numerically stable softpick and why it does not fix zero-support rows}
\label{app:derivation}

The unshifted softpick expression, $\mathrm{ReLU}(e^{x_i}-1)$ over $\sum_j |e^{x_j}-1|+\epsilon$, overflows for large $x_i$. The stable form in Equation~\ref{eq:softpick} multiplies numerator and denominator by $e^{-m}$, where $m=\max_j x_j$:
\[
e^{-m}\left(e^{x_i}-1\right) = e^{x_i-m} - e^{-m}.
\]
This rescaling changes the size of every term by the same constant factor, and it changes neither the sign of any term nor their ratios. A row's zero-support status, whether every key has $x_j<0$, is therefore identical under the stable and unstable forms. Adding $\epsilon$ to the full denominator $D^++D^-$ only guards against $D^++D^-$ itself being zero, which needs every key in the row to score exactly zero and is measure-zero at a continuous initialization. The far more common failure, $D^+=0$ with $D^->0$, is untouched by $\epsilon$ unless $\epsilon$ is added directly to a $D^+$-only forward denominator, which is why raising $\epsilon$ helps the rectified-forward operator only briefly (Section~\ref{sec:exp-anneal}).

\section{Zero-support rows: lemma and gradient analysis}
\label{app:zerosupport}

Fix one query row with score vector $x\in\mathbb{R}^N$, let $m=\max_j x_j$, write $s_i = e^{x_i-m}-e^{-m}$, and let the attention weight be $a_i = \mathrm{ReLU}(s_i)/(D+\epsilon)$ with $D=\sum_j |s_j|$. Recall $D^+ = \sum_{j:x_j\ge 0}\mathrm{ReLU}(s_j)$ and $D^-=\sum_{j:x_j<0}|s_j|$, so $D=D^++D^-$. A row is \emph{zero-support} if $x_j<0$ for every $j$, equivalently $D^+=0$.

\begin{lemma}[Dead row]
\label{lem:dead-row}
On a zero-support row, $a_i=0$ for every $i$, and the Jacobian $\partial a_i/\partial x_k = 0$ for all $i,k$, independently of which denominator is used for the backward pass.
\end{lemma}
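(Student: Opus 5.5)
The plan is to exploit the fact that on a zero-support row every numerator term is identically zero on an open neighbourhood of $x$. The denominator, whether $D_f$ or $D_b$, only ever multiplies or divides this numerator, so it cannot matter.

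First I would fix a zero-support row, so $x_j<0$ for all $j$. For each $i$,
\[
s_i = e^{-m}\left(e^{x_i}-1\right) < 0,
\]
since $e^{-m}>0$ and $e^{x_i}<1$. Hence $\mathrm{ReLU}(s_i)=0$. In the forward pass the weight is $a_i=\mathrm{ReLU}(s_i)/(D_f+\epsilon)$, and $D_f+\epsilon\ge\epsilon>0$ for every member of Table~\ref{tab:operators}. This gives $a_i=0$ whichever forward denominator is chosen.

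For the Jacobian, the key observation is that the condition $x_j<0$ for all $j$ is strict, so it defines an open set $U\subset\mathbb{R}^N$ containing $x$. Everywhere on $U$ we have $s_i<0$, so $\mathrm{ReLU}(s_i)\equiv 0$ on $U$. Also $s_i$ is smooth on $U$: $m=\max_j x_j$ is only piecewise smooth, but $\mathrm{ReLU}(s_i)$ is identically zero on $U$ regardless, so the numerator $n_i:=\mathrm{ReLU}(s_i)$ is differentiable at $x$ with $\partial n_i/\partial x_k=0$ for all $i,k$. No kink of ReLU is touched, because $s_i$ stays strictly negative.

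Next I would write the straight-through gradient generically, using the construction $\tilde D = D_f.\mathrm{detach}()+D_b-D_b.\mathrm{detach}()+\epsilon$ of Appendix~\ref{app:implementation}. The quotient rule gives
\[
\frac{\partial a_i}{\partial x_k} = \frac{1}{\tilde D}\,\frac{\partial n_i}{\partial x_k} - \frac{n_i}{\tilde D^{2}}\,\frac{\partial D_b}{\partial x_k},
\]
where $\tilde D$ takes the forward value $D_f+\epsilon>0$. The first term vanishes by the previous step and the second vanishes because $n_i=0$. This holds for any choice of $D_b$ ($D^++D^-$, $D^+$, or $D^++\lambda D^-$), since $\partial D_b/\partial x_k$ is finite. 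For $D_b=D^+$ and for $\max(D^+,\delta)$ one may need a subgradient convention at $D^+=0$, but any finite value works. The result is that $\partial a_i/\partial x_k=0$ for all $i,k$, independent of the backward denominator, as stated in Lemma~\ref{lem:dead-row}.

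The main obstacle I expect is justifying the differentiation rigorously at non-smooth points, namely the $\max$ defining $m$ and the ReLU and absolute value inside $D$. I would handle this by noting that only finiteness of $\partial D_b/\partial x_k$ is needed, for which any subgradient selection suffices, while the numerator is handled exactly by the open-neighbourhood argument. I would also remark that the result needs $D_f+\epsilon>0$, which holds because $\epsilon>0$.
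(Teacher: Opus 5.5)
Your proof is correct and takes essentially the same route as the paper: every $s_i$ is strictly negative on the open set $\{x : x_j<0\ \forall j\}$, so the numerator is locally the constant zero function, and the quotient rule with a strictly positive denominator then gives a zero Jacobian whatever $D_b$ is. Your explicit straight-through formula, in which $\partial D_b/\partial x_k$ appears only multiplied by $n_i=0$, spells out the paper's one-line remark that no factor depends on the backward denominator; your subgradient caveat is unnecessary, since $D^+\equiv 0$ on that neighbourhood and $\max(D^+,\delta)$ is a detached forward denominator whose derivative never enters.
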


\begin{proof}
Since every $x_j<0$, the maximum satisfies $m<0$, hence $e^{-m}>1$. For each $i$, $x_i\le m$ gives $e^{x_i-m}\le 1<e^{-m}$, so $s_i=e^{x_i-m}-e^{-m}<0$ strictly. Therefore $\mathrm{ReLU}(s_i)=0$ for all $i$, and $a_i=0$. Because the inequality $s_i<0$ is strict, it holds on an open neighborhood of $x$, on which $\mathrm{ReLU}(s_i)\equiv 0$ as a function of the scores. The numerator of $a_i$ is thus locally the constant zero function, so $\partial\,\mathrm{ReLU}(s_i)/\partial x_k=0$; with a zero numerator and a nonzero denominator $D+\epsilon>0$, the quotient rule gives $\partial a_i/\partial x_k=0$. No factor in this argument depends on the backward denominator $D_b$: the score-dependent part of $a_i$ that could carry gradient is the numerator, and it is locally constant.
\end{proof}

Lemma~\ref{lem:dead-row} is the row-level form of the original dead-Jacobian argument: a dead row cannot be revived by any choice of backward denominator, because there is no gradient in its numerator to route. A head dies only as an aggregate of its rows turning zero-support, which is why head death is gradual while row death is immediate.

\paragraph{Gradient blow-up near the dead set.} The lemma covers the exactly-dead row. The instability of the rectified-forward operator comes from \emph{near}-dead rows under its $D^+$-only forward. Consider a row whose single positive key is barely positive, so $D^+=\eta\to 0^+$; taking that key as the arg max, its numerator is $p=\mathrm{ReLU}(1-e^{-x})\approx x=\eta$ with $p'=\partial p/\partial x\approx 1$. Its forward weight is $a=p/(\eta+\epsilon)$, and
\[
\frac{\partial a}{\partial x}
= \frac{p'(\eta+\epsilon) - p\,\eta'}{(\eta+\epsilon)^2}
= \frac{(\eta+\epsilon) - \eta}{(\eta+\epsilon)^2} + O\!\Big(\tfrac{\eta}{(\eta+\epsilon)^2}\Big)
= \frac{\epsilon}{(\eta+\epsilon)^2} + O\!\Big(\tfrac{\eta}{(\eta+\epsilon)^2}\Big),
\]
so $\|\partial a/\partial x\| = \Theta\big((\eta+\epsilon)^{-1}\big)$. This gives the following boundary behavior.

\begin{corollary}[Discontinuity across the dead set]
\label{cor:discontinuity}
For the $D^+$-only forward, $\|\partial a/\partial x\|$ grows as $\eta^{-1}$ as $\eta\to 0^+$ away from the $\epsilon$ floor, saturates at $\Theta(\epsilon^{-1})$ once $\eta\lesssim\epsilon$, and then drops to exactly $0$ on the dead set $\eta=0$ (Lemma~\ref{lem:dead-row}). The per-row gradient is therefore discontinuous across the zero-support boundary.
\end{corollary}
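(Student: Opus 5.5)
The plan is to study the one-parameter family of near-dead rows already set up before the corollary and compute the per-row Jacobian exactly, then read off the three regimes. Fix a row whose arg-max key $k$ has score $x=x_k>0$ small and whose other keys score negatively. For the $D^+$-only forward, $D^+$ reduces to the single term $p(x)=1-e^{-x}$, and the weight on $k$ is $a=p/(p+\epsilon)$. Keys with $x_j<0$ contribute nothing to the numerator, and nothing to the forward $D^+$ either. So their weights vanish identically near this point, and their partials vanish as well, except through the backward denominator, which I handle separately below. This collapses the problem to a scalar function of $x$ alone. Writing $\eta=p(x)$, the exact derivative is
\[
\frac{da}{dx}=\frac{p'(x)\,\epsilon}{(p(x)+\epsilon)^2}=\frac{e^{-x}\,\epsilon}{(\eta+\epsilon)^2}.
\]
Here the earlier expansion's $O(\cdot)$ term disappears, because $p$ and $\eta$ coincide on this family.

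Next I would split into regimes using $e^{-x}\to1$. When $\epsilon\ll\eta$, the derivative is $\approx\epsilon/\eta^2$. When $\eta\lesssim\epsilon$, it is $\Theta(\epsilon^{-1})$, so the saturation level $\epsilon^{-1}$ follows directly.

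The growth statement ``$\eta^{-1}$'' as written needs care. The honest exact rate of this single entry is $\epsilon\,\eta^{-2}$, not $\eta^{-1}$. To match the corollary I would use the displayed estimate $\|\partial a/\partial x\|=\Theta((\eta+\epsilon)^{-1})$ from the preceding computation, which bounds the full Jacobian including the $O(\eta/(\eta+\epsilon)^2)$ cross terms. Under the backward denominator $D_b=D^++D^-$, differentiating $1/D_b$ against the negative keys contributes entries of size $p\cdot|\partial D^-/\partial x_j|/(\eta+\epsilon)^2\sim\eta/(\eta+\epsilon)^2$. For $\eta\gg\epsilon$ these are of order $\eta^{-1}$ and dominate $\epsilon\eta^{-2}$. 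So the norm grows like $\eta^{-1}$, capped at $\Theta(\epsilon^{-1})$ once $\eta\lesssim\epsilon$. I expect this bookkeeping of which term dominates in which regime to be the main obstacle. In particular I must check that $\partial D^-/\partial x_j$ stays $\Theta(1)$ on the family (it does, since $|s_j|=e^{-m}-e^{x_j-m}$ has bounded derivative as $m\to0^+$). I must also ensure the cross terms do not cancel. For that I would take the norm to be at least the magnitude of any single entry.

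Finally, on the boundary set $\eta=0$, meaning all $x_j<0$, Lemma~\ref{lem:dead-row} gives a Jacobian that is identically zero on an open neighborhood, independent of $D_b$. Comparing this with the lower bound $\Theta(\epsilon^{-1})$ along the approach $\eta\to0^+$ shows that the limit of the gradient from the support side is bounded away from $0$. That is exactly a jump discontinuity of the per-row gradient across the zero-support boundary.
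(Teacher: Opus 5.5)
Your proposal is correct and follows the paper's route. It uses the same family with a single barely-positive arg-max key, a Jacobian of size $\Theta((\eta+\epsilon)^{-1})$ on it, the two regimes read off from that, and Lemma~\ref{lem:dead-row} on the dead side to produce the jump.

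Your bookkeeping is sharper than the paper's. Differentiating the $D^+$-only quotient itself gives exactly $\epsilon e^{-x}/(\eta+\epsilon)^2$, so the paper's $O(\eta/(\eta+\epsilon)^2)$ remainder vanishes there. The $\eta^{-1}$ rate therefore comes from the $D^-$-dependent part of the backward denominator, i.e.\ your cross entries $\eta\,e^{x_j-m}/(\eta+\epsilon)^2$. A matched $D_b=D^+$ would instead give $\epsilon\eta^{-2}$ growth, with the same $\Theta(\epsilon^{-1})$ saturation.
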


This discontinuity is why the operator is unstable rather than merely slow: an optimizer that walks a row toward the dead set sees exploding gradients right up to the boundary. It also predicts the floored operator's behavior exactly: flooring $D^+$ at $\delta$ pushes rows off the exact-zero set but leaves them in the $\eta^{-1}$ regime with $\eta\ge\delta$, so the model learns yet never stabilizes (Section~\ref{sec:exp-ablation}). The per-row constant $\epsilon^{-1}$ is $10^{6}$ at $\epsilon=10^{-6}$; the larger norms observed in practice, $10^{12}$ to $10^{14}$ (Section~\ref{sec:exp-anneal}), reflect this per-row blow-up propagated through the score-to-parameter Jacobian and summed over the many near-dead rows per layer (below), and we characterize the per-row scaling rather than claim a closed form for the aggregate. Raising $\epsilon$ rescales the constant once, but each optimizer step produces fresh near-dead rows, so the blow-up recurs: the $\epsilon=1.0$ smoke test climbs from $\approx 3$ back past $100$ within 60 steps and to $397.6$ by step 99.

\paragraph{Expected number of zero-support rows at initialization.} We make Proposition~\ref{prop:zero-support} quantitative under an explicit idealization.

\begin{proposition}[Expected zero-support count]
\label{prop:count}
Model the key scores of a row at initialization as independent and symmetric about zero. This is the natural first-order model at init: with $Q,K$ drawn from independent zero-mean Gaussian weights acting on normalized inputs, each score $x_j=\langle q,k_j\rangle/\sqrt{d}$ is mean-zero and marginally symmetric, so $\mathbb{P}(x_j<0)=\tfrac12$. For a causal query at position $t$ (with $t$ visible keys), $\mathbb{P}(\text{zero-support})=\mathbb{P}(\text{all }t\text{ keys}<0)=2^{-t}$. Summing over the $L$ query positions of a length-$L$ sequence, the expected number of zero-support rows per head is
\[
\sum_{t=1}^{L} 2^{-t} = 1 - 2^{-L} < 1,
\]
and over $H$ heads per layer the expectation is $H\,(1-2^{-L})\approx H$, concentrated at the earliest positions ($t{=}1$ contributes $\tfrac12$, $t{=}2$ contributes $\tfrac14$).
\end{proposition}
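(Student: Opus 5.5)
The plan is to reduce the count to a sum of per-row probabilities via linearity of expectation, so that no dependence between rows ever has to be controlled. Fix one head and one length-$L$ sequence, and let $Z_t$ be the indicator that the causal query at position $t$ is zero-support. By the characterization just before Lemma~\ref{lem:dead-row}, this is the event that all $t$ visible scores $x_1,\dots,x_t$ are strictly negative. Equivalently, $D^+=0$; Appendix~\ref{app:derivation} shows this status does not depend on the max-shift. The expected number of zero-support rows is then $\mathbb{E}\sum_{t=1}^L Z_t=\sum_{t=1}^L\mathbb{P}(Z_t=1)$. This holds even though rows sharing keys are correlated, so the only per-row input needed is $\mathbb{P}(Z_t=1)$.

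To compute $\mathbb{P}(Z_t=1)$ under the stated idealization, I would take the scores within a row to be independent and symmetric about zero. I would also assume each score has a continuous law, so that $\mathbb{P}(x_j=0)=0$. Symmetry then gives $\mathbb{P}(x_j<0)=\tfrac12$, and independence gives $\mathbb{P}(Z_t=1)=\prod_{j\le t}\mathbb{P}(x_j<0)=2^{-t}$. A finite geometric sum yields $\sum_{t=1}^L 2^{-t}=1-2^{-L}<1$. Linearity across the $H$ heads of a layer multiplies this by $H$. The terms $t=1,2$ contribute $\tfrac12$ and $\tfrac14$, which shows the expectation is concentrated at the earliest positions.

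The main obstacle is justifying the modeling premise, not the computation. I would first argue marginal symmetry. With $q=W_Qh_t$ and $k_j=W_Kh_j$ for independent zero-mean Gaussian $W_Q$, the map $W_Q\mapsto -W_Q$ preserves the law of the weights and flips the sign of every score in row $t$ simultaneously. Hence $x$ and $-x$ have the same joint law. A Gaussian score with positive variance is continuous, which handles ties. Independence across keys is the genuinely idealized part, since all $x_j$ share the same $q$. My approach is to state it as an assumption, as the proposition does, rather than derive it.

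I would also note what survives without independence. Conditional on $q$, the scores $x_j=\langle q,W_Kh_j\rangle/\sqrt d$ are jointly Gaussian with mean zero. The $t=1$ term therefore equals $\tfrac12$ exactly, with no independence needed. This already gives an expected count of at least $H/2$ per layer, which is enough to support the qualitative claim of Proposition~\ref{prop:zero-support} that zero-support rows appear in every layer.
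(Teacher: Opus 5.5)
Your proposal is correct and follows the paper's route. The independent-symmetric idealization gives $\mathbb{P}(\text{all } t \text{ keys}<0)=2^{-t}$, and summing over positions and heads gives $H(1-2^{-L})$. Your added pieces are sound and make explicit what the paper leaves implicit: linearity of expectation across correlated rows, continuity to rule out ties, the $W_Q\mapsto -W_Q$ symmetry argument, and the independence-free exactness of the $t=1$ term. One small caution on your closing aside: the $H/2$ bound comes entirely from first-position rows, so it does not support the part of Proposition~\ref{prop:zero-support} about zero-support rows beyond the first position.
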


So the expected count is $\approx H$ zero-support rows per layer, not merely ``at least one,'' and they sit at low query positions where few keys are visible, exactly the empirical pattern of Section~\ref{sec:exp-anneal}. The idealization is only that: keys share the $K$ projection and RoPE correlates neighbors, so scores are not independent, and any additive score bias breaks symmetry. We therefore read Proposition~\ref{prop:count} as a computed expectation under a named, defensible model whose prediction ($\Theta(H)$ per layer, early positions) matches the measured count ($\ge 1$ in every one of 12 layers, at positions beyond the first), rather than as an exact law.

\section{Why the stop-gradient operator is stable, and the bias it pays}
\label{app:stability}

Corollary~\ref{cor:discontinuity} shows the $D^+$-only forward blows up near the dead set. The surviving operators avoid this for a reason that is equally provable.

\begin{lemma}[Bounded gradient of the full-denominator forward]
\label{lem:bounded}
For softpick and the stop-gradient operator, whose forward denominator is $D_f=D^++D^-=\sum_j|s_j|$, we have $D_f=0$ only if $x_j=0$ for every $j$, a measure-zero event at a continuous initialization. Hence $D_f>0$ almost surely, and on any row with at least one strictly negative key $x_{\min}<0$,
\[
D_f \;\ge\; D^- \;\ge\; |s_{\min}| \;=\; e^{-m}\big(1-e^{x_{\min}}\big) \;>\; 0,
\]
a bound that does not involve $\epsilon$. The resulting weight gradient is $\|\partial a/\partial x\|=O(1/D_f)$, almost surely finite; for the stop-gradient operator the backward routes through $D^+$ but its two terms both carry the forward denominator, $\partial a/\partial x = \mathrm{num}'/(D^++D^-) - \mathrm{num}\,(D^+)'/(D^++D^-)^2 = O\big(1/(D^++D^-)\big)$, so it is $O(1/D_f)$ rather than $O(1/\epsilon)$.
\end{lemma}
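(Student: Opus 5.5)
We prove the three claims of Lemma~\ref{lem:bounded} in order: positivity of $D_f$, the explicit lower bound, and the gradient scaling.

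\textbf{Positivity.} With $s_j=e^{x_j-m}-e^{-m}=e^{-m}(e^{x_j}-1)$, we have $D_f=\sum_j|s_j|=e^{-m}\sum_j|e^{x_j}-1|$. Since $e^{-m}>0$ and $|e^{x_j}-1|=0$ exactly when $x_j=0$, it follows that $D_f=0$ if and only if every $x_j=0$. At a continuous initialization, scores are images of continuously distributed weights. Under the mild assumption that each $x_j$ has a distribution with no atom at $0$, the event that all $x_j=0$ has probability zero, so $D_f>0$ almost surely. This is the same observation already made in Appendix~\ref{app:derivation}.

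\textbf{Lower bound.} Every summand is nonnegative and the split of Equation~\ref{eq:dplusminus} is a partition, so $D_f\ge D^-\ge|s_k|$ for any single index $k$ with $x_k<0$. Taking $k=\arg\min_j x_j$ gives $|s_{\min}|=e^{-m}(1-e^{x_{\min}})>0$, because $x_{\min}<0$. This bound involves only the scores, never $\epsilon$.

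\textbf{Gradient bound.} Write $a_i=p_i/(D_f+\epsilon)$ with $p_i=\mathrm{ReLU}(s_i)$. The derivatives $\partial s_j/\partial x_k$ are bounded: $\partial s_j/\partial x_k=e^{x_j-m}\delta_{jk}-(e^{x_j-m}-e^{-m})\,\partial m/\partial x_k$. The argmax derivative $\partial m/\partial x_k$ is an indicator, valid almost everywhere. Since $e^{x_j-m}\le 1$, each such derivative is bounded by $O(1+e^{-m})$.

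The complication is the $e^{-m}$ factor. If $m$ is very negative, the row is zero-support. That case is covered trivially by Lemma~\ref{lem:dead-row}, since the Jacobian vanishes. Otherwise $m\ge 0$, so $e^{-m}\le 1$ and $|p_i'|,|D'|\le C\,N$. I would state the $O(\cdot)$ constants as depending on $N$ and on the row, which is how I read the lemma.

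The straight-through construction of Appendix~\ref{app:implementation} gives the backward rule $\partial a_i/\partial x_k=p_i'/\tilde D-p_i\,D_b'/\tilde D^2$, where $\tilde D=D_f+\epsilon$ in value. For softpick, $D_b=D_f$. For the stop-gradient operator, $D_b=D^+$.

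\begin{itemize}
\item The first term is at most $C/D_f$.
\item For the second term, the key observation is $p_i\le D^+\le D_f$. Hence $p_i/\tilde D^2\le 1/D_f$, and the term is $O(|D_b'|/D_f)$.
\end{itemize}

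Both pieces are therefore $O(1/D_f)$, with $\epsilon$ only helping. Combined with the lower bound, the gradient is almost surely finite.

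\textbf{Main obstacle.} The hard step is making the ``$O(1/D_f)$, not $O(1/\epsilon)$'' statement uniform. The bound $1/D_f\le e^{m}/(1-e^{x_{\min}})$ can still be large when all scores are near $0$. My plan is to keep the claim row-wise rather than uniform, and to contrast it with Corollary~\ref{cor:discontinuity}. There the numerator $p\approx\eta$ is not dominated by a large $D^-$, whereas here $D_f\ge D^-$ stays bounded away from zero whenever some key is strictly negative. The non-differentiable points, namely ties in the argmax and $x_j=0$, form a measure-zero set, and I would handle them by using one-sided derivatives.
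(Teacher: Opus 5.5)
Your proposal is correct and follows the same route as the paper's inline argument. It uses the factorization $s_j=e^{-m}(e^{x_j}-1)$ for positivity, the $D^+/D^-$ partition for the lower bound, and the straight-through quotient rule in which both terms carry the forward value $D_f+\epsilon$. You also usefully make explicit two steps the paper leaves implicit: the bound $p_i\le D^+\le D_f$, which the second term needs in order to be $O(1/D_f)$, and the case split on the sign of $m$ that bounds $\partial s_j/\partial x_k$. On that split, note that any $m<0$ already means the row is zero-support, not only a very negative $m$; this is what makes your ``otherwise $m\ge 0$'' exhaustive.
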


This is the formal complement of Corollary~\ref{cor:discontinuity}: the failing operator has $\mathbb{P}(D_f=0)=\Theta(1)$ per row (Proposition~\ref{prop:count}) and an $\epsilon$-controlled gradient, while the surviving operators have $\mathbb{P}(D_f=0)=0$ and a gradient bounded by the score configuration alone. The $\epsilon$ floor is never load-bearing for them.

\paragraph{Why the annealed interior exists.} The same bound explains Section~\ref{sec:exp-anneal}'s ledge. Along the anneal, the forward denominator is $D^++cD^-\ge cD^->0$ on any row with a negative key, so for every $c>0$ the forward is non-degenerate and the singularity appears only exactly at $c=0$. The interior point $c=0.111$ is therefore traversable in principle. The from-scratch failure of the partial operator is not that the forward is singular but that random initialization places $\approx H$ rows per layer in the near-dead regime (Proposition~\ref{prop:count}), where a small $c$ makes the $cD^-$ floor weak; the anneal escapes this only because it inherits weights from the trained $c=1$ model, whose rows already sit away from the dead set. This states the mechanism precisely without claiming a basin-of-attraction proof, which is out of scope here.

\paragraph{The bias the stop-gradient pays.} The stop-gradient operator evaluates with $D^++D^-$ but differentiates as though the denominator were $D^+$, so its update direction is not the gradient of the objective it evaluates. It is a biased estimator of that gradient, differing from it by the omitted term that flows through $D^-$. This bias is inherent to the straight-through construction \citep{bengio2013estimating} and we do not characterize it. Empirically the operator trains stably to a loss comparable to softpick (Appendix~\ref{app:full-cost}); why a biased-but-stable update converges to a sensible optimum, and how large the bias is, we leave to future work. We name it here rather than leave it implicit, since it is the honest mathematical status of the operator.

\section{Passkey scoring}
\label{app:scorer}

Our headline metric is exact match: the generated answer must equal the target digit string within the fixed output-token budget of Appendix~\ref{app:implementation}. Exact match can in principle be format-fragile, scoring zero for an answer that is correct but longer or differently tokenized, so we checked whether softpick's 230M zero (Section~\ref{sec:exp-cost}) is such an artifact by computing digit-level metrics on all $n=500$ generations per context length (Table~\ref{tab:digit-level}). It is not an artifact. Softpick emits the target digit string in $0\%$ of cases at every context length, its first generated number matches the target in $0\%$ of cases, and most of its generations contain no digits at all (only 32\% / 43\% / 0\% do). The 23 to 33\% digit-level recall at short context is longest-common-subsequence overlap between two short random numbers, no better than chance, and it falls to $0\%$ at $\sim$3200 where the model emits no digits. So softpick genuinely fails to retrieve at 230M rather than retrieving in a form the scorer misses; the stop-gradient operator, by contrast, emits the target verbatim (its exact-match score is its digit-level score). This also supersedes an earlier extended-budget substring number for softpick ($29.2\%$ / $41.2\%$): with a long enough generation the target digits appear by coincidence, which the per-sample analysis rules out at $n=500$. Quantization is reported at 2-bit and 4-bit round-to-nearest only.

\begin{table}[h]
\centering
\small
\caption{Digit-level analysis of softpick's 230M generations, $n=500$ per context length. All strict digit metrics are $0\%$; the low recall is chance overlap, and most generations contain no digits.}
\label{tab:digit-level}
\begin{tabular}{lccc}
\toprule
Metric & ctx $\sim$1024 & ctx $\sim$2048 & ctx $\sim$3200 \\
\midrule
Exact match & 0\% & 0\% & 0\% \\
Target digits appear (substring) & 0\% & 0\% & 0\% \\
First generated number correct & 0\% & 0\% & 0\% \\
Digit recall (LCS / target length) & 23.4\% & 32.7\% & 0.0\% \\
Mean normalized edit distance & 0.77 & 0.67 & 1.00 \\
Generations containing any digit & 32.0\% & 43.0\% & 0.0\% \\
\bottomrule
\end{tabular}
\end{table}

\section{Inference-time denominator swaps}
\label{app:inference}

This is the inference experiment referred to at the start of Section~\ref{sec:exp-anneal}. If $D^-$ only sets the size of the attention weights, rescaling a $D^+$-only output to the right magnitude should recover softpick's behavior. We take a softpick-trained 340M checkpoint, swap in different forward denominators at inference only (no training), and measure passkey retrieval (full sweep in Table~\ref{tab:inference-full}). Softpick retrieves at 96.7\% / 93.3\% / 86.7\% across $\sim$1024 / $\sim$2048 / $\sim$3200 tokens. Forwarding on $D^+$ alone gives gibberish and 0\% retrieval at every length, whether or not the output is rescaled to match the original magnitude, and mixing in a fraction $c$ of $D^-$ shows a sharp threshold rather than a smooth recovery: $c=0.10$ still collapses to zero, while $c=0.30$ jumps back near baseline. Figure~\ref{fig:attn-surface}(a) shows why rescaling fails. Softpick's attention surface is sparse, one clean spike per query row; removing $D^-$ does not soften that spike toward uniform attention but scatters new spikes across unrelated key positions, at a peak weight exceeding the original signal. The failure is the wrong keys winning, not underconfidence, so the score-scale and non-degeneracy roles of $D^-$ cannot be separated by a static swap, which motivates the annealing experiment of Section~\ref{sec:exp-anneal}. The layer-2 companion surface is Figure~\ref{fig:attn-surface-layer2}.

\section{Full inference-time and annealing sweeps}
\label{app:full-tables}

Table~\ref{tab:inference-full} is the full inference-only denominator sweep of Appendix~\ref{app:inference}, including the length-scaled variants.

\begin{table}[h]
\centering
\small
\caption{Full inference-only denominator sweep, 340M checkpoint, $n=30$ per cell.}
\label{tab:inference-full}
\begin{tabular}{lccc}
\toprule
Condition & ctx $\sim$1024 & ctx $\sim$2048 & ctx $\sim$3200 \\
\midrule
abs (softpick baseline) & 96.7\% & 93.3\% & 86.7\% \\
relu ($D^+$ only) & 0\% & 0\% & 0\% \\
relu, rescaled & 0\% & 0\% & 0\% \\
damped $c=0.01$ & 0\% & 0\% & 0\% \\
damped $c=0.05$ & 0\% & 0\% & 0\% \\
damped $c=0.10$ & 0\% & 0\% & 0\% \\
damped $c=0.30$ & 100\% & 86.7\% & 66.7\% \\
damped $c=0.50$ & 100\% & 96.7\% & 66.7\% \\
damped $c=0.70$ & 96.7\% & 93.3\% & 76.7\% \\
damped $c=1.00$ (= abs) & 96.7\% & 93.3\% & 86.7\% \\
length-scaled $\gamma=0.25$ & 90\% & 0\% & 0\% \\
length-scaled $\gamma=0.50$ & 0\% & 0\% & 0\% \\
length-scaled $\gamma=0.75$ & 0\% & 0\% & 0\% \\
length-scaled $\gamma=1.00$ & 0\% & 0\% & 0\% \\
\bottomrule
\end{tabular}
\end{table}

The damped sweep shows a threshold on both sides of $c=0.30$: retrieval is at baseline for $c\ge0.30$ and exactly zero for $c\le0.10$, with no tested condition between. Length-scaling, a per-token rescaling by an estimate of context length rather than a fixed coefficient, fails everywhere except a partial, short-context recovery at $\gamma=0.25$, which reinforces that the failure is not about output magnitude.

Table~\ref{tab:anneal-full} is the full checkpoint sweep for the $c{:}1\to0$ anneal. The schedule is analytic, not read off logs: with 10 stairs of 100 steps, $c$ at stair $k$ is $(1-k/9)^2$ for $k=\min(\lfloor\text{step}/100\rfloor,9)$, so step-500 sits at $c_4=0.3086$, step-700 at $c_6=0.1111$, step-900 at $c_8=0.0123$, and step-1000 at $c_9=0$. These match the logged values to three decimals, confirming the schedule ran as specified.

\begin{table}[h]
\centering
\small
\caption{Passkey retrieval along the anneal, 340M. Step-700 is the last checkpoint before collapse.}
\label{tab:anneal-full}
\begin{tabular}{lcccc}
\toprule
Checkpoint & $c$ & ctx $\sim$1024 & ctx $\sim$2048 & ctx $\sim$3200 \\
\midrule
Baseline & 1.000 & 96.7\% & 93.3\% & 86.7\% \\
step-500 & 0.309 & 100.0\% & 53.3\% & 40.0\% \\
step-700 & 0.111 & 96.7\% & 90.0\% & 63.3\% \\
step-900 & 0.012 & 83.3\% & 80.0\% & 30.0\% \\
step-1000 & 0.000 & 0\% & 0\% & 0\% \\
\bottomrule
\end{tabular}
\end{table}

\section{Full kurtosis, quantization, and retrieval tables}
\label{app:full-cost}

Table~\ref{tab:kurtosis-quant-full} gives the complete softpick versus stop-gradient comparison of Section~\ref{sec:exp-cost} at both scales. The two scales use different attention-weight aggregations, mean at 130M and peak single-layer (layer 3) at 230M, since that is what was available at each scale; read that row only within a scale, not across.

\begin{table}[h]
\centering
\small
\caption{Softpick vs.\ stop-gradient operator, full matched-scale comparison. Single checkpoint per operator per scale, no seeds.}
\label{tab:kurtosis-quant-full}
\resizebox{\linewidth}{!}{%
\begin{tabular}{lcccc}
\toprule
& \multicolumn{2}{c}{130M} & \multicolumn{2}{c}{230M} \\
Metric & Softpick & Stop-grad & Softpick & Stop-grad \\
\midrule
Baseline loss (fp32) & 2.9647 & 2.9729 & 2.9091 & 2.8569 \\
Mean hidden-state kurtosis & 229.06 & 345.81 & 700 & 695 \\
Attn.-weight kurtosis (mean 130M / peak layer-3 230M) & 36{,}363.89 & 31{,}837.27 & 7{,}894{,}308 & 30{,}128{,}298 \\
2-bit RTN loss delta & +5.81 & +6.16 & +6.32 & +6.52 \\
4-bit RTN loss delta & +2.42 & +4.62 & +6.29 & +6.13 \\
Dead heads at end of run (eps=$10^{-6}$) & 2 & 0 & 29/238 & 23/238 \\
\bottomrule
\end{tabular}}
\end{table}

Table~\ref{tab:passkey-full} gives the full 230M passkey comparison. The softpick row includes two random seed offsets, to check that its near-zero exact-match score is not a seed artifact. It is not.

\begin{table}[h]
\centering
\small
\caption{Passkey retrieval, 230M, exact match, $n{=}500$ per cell unless noted, matched output-length budget.}
\label{tab:passkey-full}
\begin{tabular}{lccc}
\toprule
Condition & ctx $\sim$1024 & ctx $\sim$2048 & ctx $\sim$3200 \\
\midrule
Softpick, seed offset 0 & 0.0\% (0/500) & 0.0\% (0/500) & 0.0\% (0/500) \\
Softpick, seed offset 100{,}000 & 0.0\% (0/500) & 0.2\% (1/500) & 0.0\% (0/500) \\
Stop-gradient, seed offset 0 & 100.0\% (500/500) & 77.6\% (388/500) & 0.0\% (0/500) \\
\bottomrule
\end{tabular}
\end{table}

A digit-level breakdown of softpick's generations (Table~\ref{tab:digit-level}) confirms its zero is genuine: it does not emit the passkey digits, so exact match is not understating it.

At 130M, before softpick's output formatting had degraded, exact-match retrieval instead favors softpick (Table~\ref{tab:passkey-130m}), so the scale reversal noted in Section~\ref{sec:exp-cost} is genuine rather than a scorer artifact. The 130M training losses are also close, 2.694 (softpick) and 2.708 (stop-gradient) at step 30k.

\begin{table}[h]
\centering
\small
\caption{Passkey retrieval, 130M, exact match, $n{=}500$ per cell.}
\label{tab:passkey-130m}
\begin{tabular}{lcc}
\toprule
Context & Softpick & Stop-gradient \\
\midrule
$\sim$1024 & 99.8\% (499/500) & 85.6\% (428/500) \\
$\sim$2048 & 65.8\% (329/500) & 55.2\% (276/500) \\
\bottomrule
\end{tabular}
\end{table}

\section{The full operator-family boundary}
\label{app:family}
This expands Section~\ref{sec:exp-ablation}: the detailed failure of each operator, in order of how much each one narrows down the cause. Table~\ref{tab:family-status} collects the outcomes.
\begin{table}[h]
\centering
\small
\caption{Every operator trained from scratch, and how it failed. Softpick and the stop-gradient operator are the only two that train.}
\label{tab:family-status}
\begin{tabular}{llll}
\toprule
Operator & Forward & Backward & Outcome (from scratch) \\
\midrule
Softpick & $D^++D^-$ & $D^++D^-$ & Trains stably to 28k+ steps \\
Stop-gradient & $D^++D^-$ & $D^+$ & Trains stably to 28k+ steps \\
Rectified-forward & $D^+$ & $D^++D^-$ & Loss flat $\sim$10.39, grad norm $10^{12}$--$10^{14}$ \\
Partial (0.11, 0.3) & $D^++\lambda D^-$ & matched & NaN before 1000 steps \\
Floored & $\max(D^+,\delta)$ & $D^++D^-$ & Learns, oscillates $\sim$4 nats, grad norm spikes \\
Matched-rectified & $D^+$ & $D^+$ & Not run (Section~\ref{sec:family-gap}) \\
\bottomrule
\end{tabular}
\end{table}
\paragraph{The floored operator shows the floor cannot be made small enough.} Replacing the forward denominator with $\max(D^+,\delta)$, a small per-layer floor sampled from the low percentile of healthy $D^+$ rows, while keeping the rectified-forward operator's full backward gradient, removes the initialization singularity: the model learns rather than pinning at the random-guess value. It never stabilizes (Figure~\ref{fig:floored-c}): loss oscillates around 4 to 4.5 nats against the stable operators' 2.75, with spikes past 8, and the gradient norm spikes to nearly $6\times10^{9}$ near step 19{,}000. We stop the 130M run at 18{,}000 steps once NaNs accumulate. Corollary~\ref{cor:discontinuity} explains why: under the straight-through construction the forward denominator enters the weight gradient as $\partial a/\partial x = \mathrm{num}'/D_f - \mathrm{num}\,D_b'/D_f^2$, so flooring at $\delta$ caps the amplification at $\Theta(\delta^{-2})$ rather than removing it. A floor large enough to tame that factor would restore most of $D^-$ and abandon rectification, which is the property the operator exists to provide.
\begin{figure}[h]
\centering
\includegraphics[width=\linewidth]{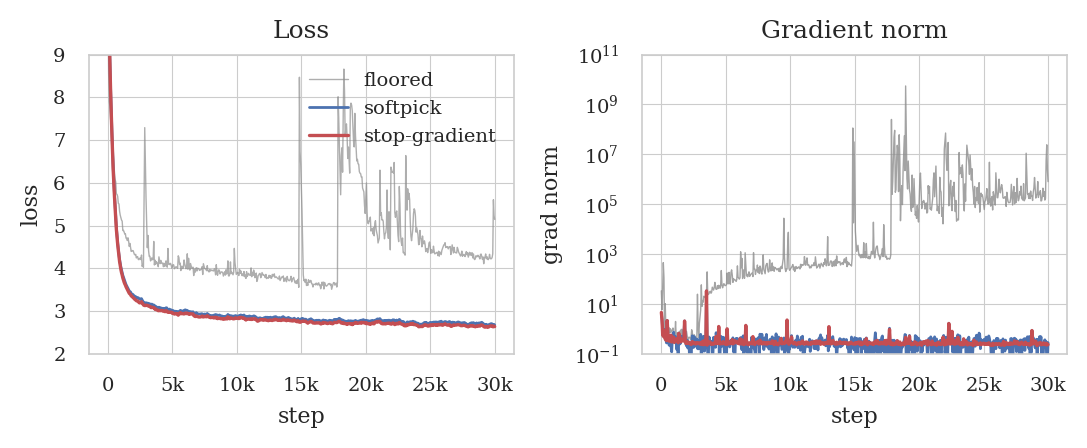}
\caption{Loss (left) and gradient norm (right) over training at 230M, for softpick, the stop-gradient operator, and the floored operator. The floored operator learns instead of pinning at random-guess loss, then oscillates well above the two stable operators and spikes to nearly $6\times10^{9}$ in gradient norm near step 19{,}000.}
\label{fig:floored-c}
\end{figure}
\paragraph{$\epsilon$ rescales the problem rather than fixing it.} Raising the rectified-forward operator's $\epsilon$ to 1.0 lowers its initial gradient norm but does not stop the climb past 100 within 60 steps and past 397 by step 99. New zero-support rows keep appearing, and each reopens the same division no matter where $\epsilon$ is set.
\paragraph{The partial operator fails from scratch even where annealing survives.} The matched operator $D^++\lambda D^-$, trained from random initialization at $\lambda=0.11$ and $0.3$, produces NaN gradients before 1000 steps at both values, in the same region of $c$ that Section~\ref{sec:exp-anneal} could cross only when it was inherited from a stable checkpoint. The rectified-forward operator, whose full gradient is the more principled design, still does not train.
\paragraph{The cell we did not run.}
\label{sec:family-gap}
Over $\{D^+,\,D^++D^-\}$ in each pass, the family covers three of four cells; the matched-rectified operator ($D_f=D_b=D^+$) is absent. Every failing member above therefore pairs a small $D_f$ with a $D^-$-inclusive backward, so forward degeneracy and forward/backward mismatch remain entangled: we cannot rule out that the rectified-forward failure is driven by the mismatch rather than by the degeneracy alone. Corollary~\ref{cor:discontinuity} predicts the matched case diverges too, since its gradient is $\Theta((\eta+\epsilon)^{-1})$ on near-dead rows regardless of $D_b$, and Lemma~\ref{lem:dead-row} rules out revival by any backward choice on exactly-dead rows. Neither argument substitutes for the run. Note also that the rectified-only variant of \citet{zuhri2025softpick} is not this cell: their Appendix~B form masks negative scores to $-\infty$ and applies an ordinary softmax, which normalizes over the positive subset and never produces a near-zero denominator, which is why it trains where our rectified-forward operator does not. This is the first item of Appendix~\ref{app:future}.

\section{Supplementary figures}
\label{app:supp-figures}

Figure~\ref{fig:attn-surface-layer2} is the layer-2 companion to Figure~\ref{fig:attn-surface}: softpick keeps one dominant spike, and the $D^+$-only forward again scatters it into several spurious peaks, at the same maximum weight (0.1875) at this layer, unlike the layer-8 case where the $D^+$-only peak exceeds softpick's own. Figure~\ref{fig:anneal-full} is the full four-panel training curve, loss, gradient norm, learning rate, and the $c$-schedule, for the $c{:}1\to0$ anneal of Section~\ref{sec:exp-anneal}.

\begin{figure}[h]
\centering
\includegraphics[width=\linewidth]{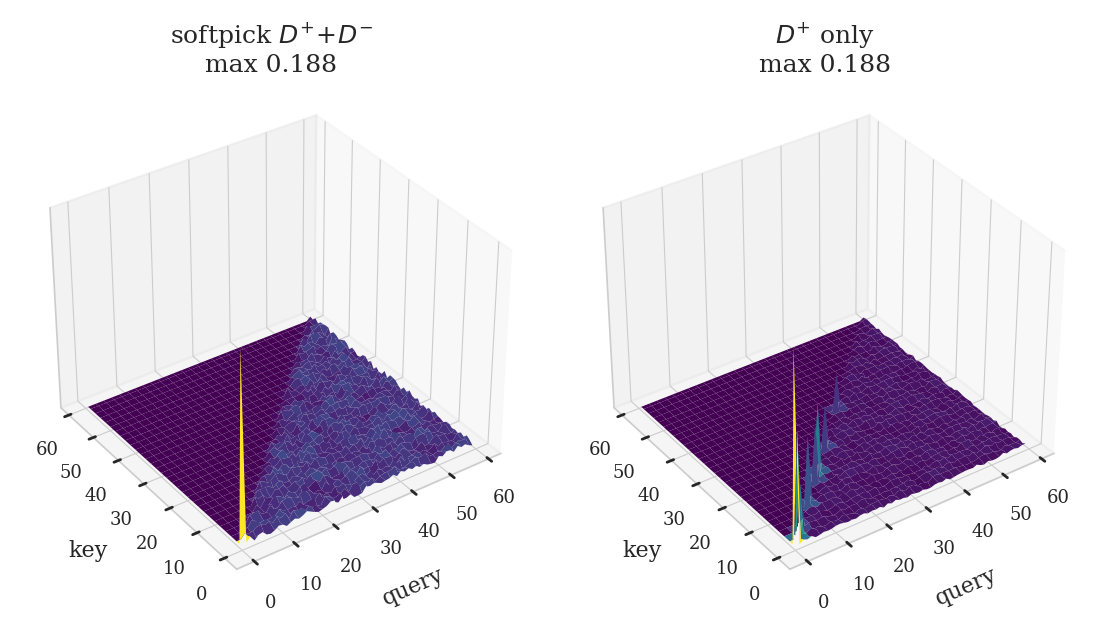}
\caption{Attention-weight surface at layer 2, companion to Figure~\ref{fig:attn-surface}. Softpick and the $D^+$-only forward reach the same maximum weight, 0.1875, but the $D^+$-only forward redistributes it across several spurious peaks.}
\label{fig:attn-surface-layer2}
\end{figure}

\begin{figure}[h]
\centering
\includegraphics[width=\linewidth]{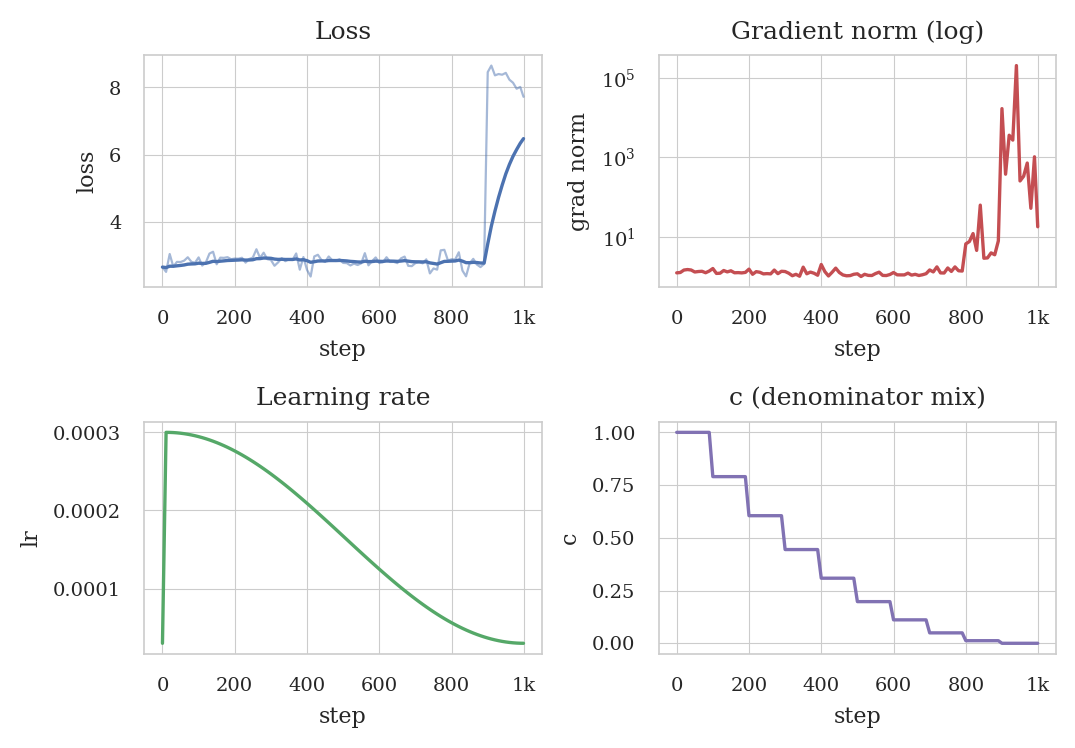}
\caption{Full training panel for the $c{:}1\to0$ anneal at 340M: loss, gradient norm (log scale), learning rate, and the $c$-schedule, all against step.}
\label{fig:anneal-full}
\end{figure}

\section{Training configuration}
\label{app:config}

\begin{table}[h]
\centering
\small
\caption{Training configuration, read from our run configurations and training scripts and the publicly released 340M checkpoint.}
\label{tab:config}
\resizebox{\linewidth}{!}{%
\begin{tabular}{lccc}
\toprule
& 130M & 230M & 340M \\
\midrule
Parameters & 130M & 228.9M & 340M \\
Hidden size & 768 & 896 & 1024 \\
Layers & 12 & 17 & 24 \\
Attention heads (total) & 144 (12/layer) & 238 (14/layer) & 384 (16/layer) \\
Head dim & 64 & 64 & 64 \\
MLP & SwiGLU, ratio 4 & SwiGLU, ratio 4 & SwiGLU, ratio 4 \\
Vocabulary & 32{,}000 & 32{,}000 & 32{,}000 \\
Peak learning rate & $4\times10^{-4}$ & $3.5\times10^{-4}$ & $3\times10^{-4}$ (anneal continuation) \\
LR schedule & cosine, 300-step warmup, min 10\% & cosine, 300-step warmup, min 10\% & cosine, 1\% warmup, min 10\% \\
Max grad norm & 1.0 & 1.0 & 1.0 \\
Precision & bf16 & bf16 & fp32 (anneal), bf16 (base ckpt) \\
Batch (local $\times$ accum $\times$ GPUs) & $4\times16\times2=128$ & $4\times16\times2=128$ & $2\times4=8$ (anneal) \\
GPUs & 2$\times$ A6000 & 2$\times$ A6000 & 1$\times$ A6000 (anneal) \\
Position encoding & RoPE, $\theta=10{,}000$ & RoPE, $\theta=10{,}000$ & RoPE, $\theta=10{,}000$ \\
Max context & 8192 & 8192 & 8192 (2048 anneal) \\
Kernel & fast Triton & fast Triton & eager (anneal), Triton (base) \\
Data & FineWeb-Edu & FineWeb-Edu & FineWeb-Edu, \texttt{sample-100BT} \\
Softpick $\epsilon$ & $10^{-6}$ & $10^{-6}$ & $10^{-8}$ (anneal continuation) \\
Steps trained & 20k+ & 30k & 1000-step anneal \\
\bottomrule
\end{tabular}}
\end{table}

\paragraph{Compute.} Each from-scratch grid run (130M and 230M, one per operator) uses two NVIDIA RTX A6000 GPUs (48GB each) in data-parallel (DDP world size 2), for a global batch of 128 sequences of length 2048, matching the softpick paper's global batch; a single 30k-step 230M run takes roughly three days on this setup. The 340M anneal continuation runs on a single A6000, using the eager, non-Triton softpick path needed for a per-step-modifiable backward, with AdamW ($\epsilon=10^{-15}$), batch size 2, gradient accumulation 4, and gradient checkpointing; its sequence length drops from the base checkpoint's 4096 to 2048 because materializing the full attention matrix at 4096 does not fit alongside the model and optimizer state on 48GB. The software stack is \texttt{torch} 2.5.1+cu124, \texttt{triton} 3.1.0, \texttt{flash-linear-attention} 0.1.2, and \texttt{transformers} 4.51.3, with AdamW \citep{loshchilov2019decoupled} throughout and FineWeb-Edu \citep{penedo2024fineweb} as the only training data.

\section{Evaluation protocols}
\label{app:eval}

\paragraph{Passkey retrieval.} We use the landmark task \citep{mohtashami2023landmark} (harness in Appendix~\ref{app:implementation}, scoring rules in Appendix~\ref{app:scorer}). Each prompt follows a fixed template: a task description (``There is an important info hidden inside a lot of irrelevant text. Find it and memorize them.''), then \texttt{n\_garbage}$=\lceil 3.6\times\text{ctx\_len}\rceil$ characters of a repeated filler sentence (``The grass is green. The sky is blue. \dots''), with an information line (``The pass key is $P$. Remember it. $P$ is the pass key.'') inserted at a uniformly random offset inside the filler, closing with the query ``What is the pass key? The pass key is''. The passkey $P$ is an integer drawn uniformly from $[1,50000)$. We evaluate at context lengths $\sim$1024, $\sim$2048, and $\sim$3200 with greedy decoding, using $n=30$ seeds for the inference sweeps and $n=500$ for the trained-model comparisons.

\paragraph{Kurtosis.} We report excess (Fisher) kurtosis. Hidden-state kurtosis is computed over the flattened residual-stream activations of each layer and averaged across layers, with the per-layer range noted where relevant; attention-weight kurtosis is computed over the post-normalization attention matrix, reported as the layer mean at 130M and as the single worst layer's peak at 230M. Higher kurtosis indicates the heavier-tailed, outlier-prone activations that softpick was introduced to suppress.

\paragraph{Low-bit quantization.} We quantize the model weights with round-to-nearest (RTN) at the stated bit-width and report the loss delta, the increase in fp32-evaluation cross-entropy caused by quantization, so that $0$ is lossless and larger is worse. We report 2-bit and 4-bit only (Appendix~\ref{app:scorer}).

\paragraph{Dead heads.} A head is counted dead at threshold $\tau$ if its maximum attention-output magnitude stays below $\tau$ for more than 95\% of tokens. We report $\tau=10^{-6}$ throughout, and $\tau=10^{-2}$ where noted; sensitivity to $\tau$ is discussed in Appendix~\ref{app:corrections}.

\section{The 340M annealed checkpoint}
\label{app:corrections}

The step-700 annealed checkpoint of Section~\ref{sec:exp-anneal} ($c=0.111$, the last stable step before collapse) reaches fp32 loss 2.2173. Its mean hidden-state kurtosis is $\approx$100 (range 32 to 282), the lowest we measure anywhere in this paper, while its mean attention-weight kurtosis across the 24 layers is 209{,}718, with individual layers far higher (layer 1: 2{,}758{,}599). Its round-to-nearest quantization deltas are $+6.65$ at 2-bit and $+4.85$ at 4-bit. So the interior operating point that survives has the best activation statistics on the hidden-state axis and yet is structurally degenerate ($D^+$ 5th percentile $\approx0$ in all 24 layers, 61 dead heads), which is the central tension of the annealing result: the healthiest-looking activations sit at an operating point that cannot be trained into existence from scratch.

\paragraph{Dead-head threshold sensitivity.}


The dead-head criterion used throughout this paper, and in the source we build on, is a threshold on how often a head's output falls below a small constant. That threshold matters more than a fixed property of the model would suggest. At the 340M annealed checkpoint of Section~\ref{sec:exp-anneal}, the same model has 61 of 384 heads (15.9\%) dead under a threshold of $10^{-6}$ and 91 of 384 (23.7\%) dead under $10^{-2}$, a 49\% relative increase from a two-order-of-magnitude change in the cutoff. A head that crosses only one of these thresholds is dormant under the stricter definition and dead under the looser one, and the fact that the count moves this much with the threshold is itself evidence that the metric tracks continuous, threshold-sensitive dormancy rather than binary, permanent death. We would extend the same caution to any cross-scale comparison of dead-head counts that fixes a single threshold and treats the resulting count as if it measured the same quantity at every model size.

\section{Future work}
\label{app:future}

Every comparison in this paper rests on a single checkpoint per operator at 130M and 230M, with no seeds and no error bars, so the closing gaps we report are two points consistent with a trend rather than a demonstrated one. The natural next step is a training ladder with seeds at a larger scale, such as 450M, holding tokens-per-parameter and data fixed and staying within RoPE range, read against the scaling-law framing of \citet{kaplan2020scaling} and the compute-optimal token budgets of \citet{hoffmann2022training}. Three quantities would need to hold for the stop-gradient operator to become a recommendation rather than a survivor: the hidden-state and quantization gap-versus-parameter-count slope between 130M and 230M continuing to close, the dead-head slope continuing in the stop-gradient operator's favor, since that story is entirely about how counts grow with scale, and the peak attention-weight kurtosis residual either closing or proving to matter less than it currently appears to. A de-risking rung at an intermediate scale, run before committing to the full ladder, would test whether the amortization trend survives seeds at all. The digit-level passkey analysis of Appendix~\ref{app:scorer}, which we ran here only for softpick's 230M failure, should be extended across every operator and scale so retrieval is reported by digit recall and edit distance throughout rather than by exact match alone.

\section{Ethics statement}
\label{app:ethics}

This work trains and evaluates language models only on public data: FineWeb-Edu \citep{penedo2024fineweb}, a filtered subset of the public Common Crawl, together with the publicly released softpick 340M checkpoint \citep{zuhri2025softpick} for the inference and annealing experiments. No private, proprietary, or personally identifying data is used, and no human subjects are involved. The models are small (130M to 340M parameters) and trained for short budgets, so the compute footprint is modest, on the order of a few GPU-days per run on two RTX A6000s. The contribution is a diagnostic analysis of an existing attention operator and introduces no new capability or data, so it carries no greater dual-use risk than the base architecture it studies.

\section*{NeurIPS Paper Checklist}

\begin{enumerate}

\item {\bf Claims}
    \item[] Question: Do the main claims made in the abstract and introduction accurately reflect the paper's contributions and scope?
    \item[] Answer: \answerYes{}
    \item[] Justification: The abstract and introduction (Section~\ref{sec:introduction}) state the three contributions---the operator family over (forward, backward) denominators, the localization of the rectified-forward failure at initialization, and the 130M/230M comparison of the stop-gradient operator against softpick---and each is substantiated in Sections~\ref{sec:problem-formulation}--\ref{sec:experiments} and the appendix.
    \item[] Guidelines:
    \begin{itemize}
        \item The answer \answerNA{} means that the abstract and introduction do not include the claims made in the paper.
        \item The abstract and/or introduction should clearly state the claims made, including the contributions made in the paper and important assumptions and limitations. A \answerNo{} or \answerNA{} answer to this question will not be perceived well by the reviewers. 
        \item The claims made should match theoretical and experimental results, and reflect how much the results can be expected to generalize to other settings. 
        \item It is fine to include aspirational goals as motivation as long as it is clear that these goals are not attained by the paper. 
    \end{itemize}

\item {\bf Limitations}
    \item[] Question: Does the paper discuss the limitations of the work performed by the authors?
    \item[] Answer: \answerYes{}
    \item[] Justification: Limitations are discussed throughout: every comparison rests on a single checkpoint per operator with no seeds or error bars, the stop-gradient operator's biased-gradient status is stated explicitly (Appendix~\ref{app:stability}), and the future-work section (Appendix~\ref{app:future}) lays out what would need to hold for the operator to become a recommendation rather than a survivor.
    \item[] Guidelines:
    \begin{itemize}
        \item The answer \answerNA{} means that the paper has no limitation while the answer \answerNo{} means that the paper has limitations, but those are not discussed in the paper. 
        \item The authors are encouraged to create a separate ``Limitations'' section in their paper.
        \item The paper should point out any strong assumptions and how robust the results are to violations of these assumptions (e.g., independence assumptions, noiseless settings, model well-specification, asymptotic approximations only holding locally). The authors should reflect on how these assumptions might be violated in practice and what the implications would be.
        \item The authors should reflect on the scope of the claims made, e.g., if the approach was only tested on a few datasets or with a few runs. In general, empirical results often depend on implicit assumptions, which should be articulated.
        \item The authors should reflect on the factors that influence the performance of the approach. For example, a facial recognition algorithm may perform poorly when image resolution is low or images are taken in low lighting. Or a speech-to-text system might not be used reliably to provide closed captions for online lectures because it fails to handle technical jargon.
        \item The authors should discuss the computational efficiency of the proposed algorithms and how they scale with dataset size.
        \item If applicable, the authors should discuss possible limitations of their approach to address problems of privacy and fairness.
        \item While the authors might fear that complete honesty about limitations might be used by reviewers as grounds for rejection, a worse outcome might be that reviewers discover limitations that aren't acknowledged in the paper. The authors should use their best judgment and recognize that individual actions in favor of transparency play an important role in developing norms that preserve the integrity of the community. Reviewers will be specifically instructed to not penalize honesty concerning limitations.
    \end{itemize}

\item {\bf Theory assumptions and proofs}
    \item[] Question: For each theoretical result, does the paper provide the full set of assumptions and a complete (and correct) proof?
    \item[] Answer: \answerYes{}
    \item[] Justification: The theoretical results are numbered and stated with their assumptions: Lemma~\ref{lem:dead-row} (dead row) has a full proof in Appendix~\ref{app:zerosupport}, Corollary~\ref{cor:discontinuity} and Lemma~\ref{lem:bounded} are derived there, and Propositions~\ref{prop:zero-support} and~\ref{prop:count} state their idealizing assumptions (symmetric, independent initial scores) explicitly and flag where they hold only approximately.
    \item[] Guidelines:
    \begin{itemize}
        \item The answer \answerNA{} means that the paper does not include theoretical results. 
        \item All the theorems, formulas, and proofs in the paper should be numbered and cross-referenced.
        \item All assumptions should be clearly stated or referenced in the statement of any theorems.
        \item The proofs can either appear in the main paper or the supplemental material, but if they appear in the supplemental material, the authors are encouraged to provide a short proof sketch to provide intuition. 
        \item Inversely, any informal proof provided in the core of the paper should be complemented by formal proofs provided in appendix or supplemental material.
        \item Theorems and Lemmas that the proof relies upon should be properly referenced. 
    \end{itemize}

    \item {\bf Experimental result reproducibility}
    \item[] Question: Does the paper fully disclose all the information needed to reproduce the main experimental results of the paper to the extent that it affects the main claims and/or conclusions of the paper (regardless of whether the code and data are provided or not)?
    \item[] Answer: \answerYes{}
    \item[] Justification: Appendix~\ref{app:implementation} documents the straight-through decoupling construction and kernel changes for every operator, Appendix~\ref{app:config} gives the full training configuration and compute (architecture, learning-rate schedules, batch, precision, software versions, $\epsilon$ values), and Appendices~\ref{app:eval} and~\ref{app:scorer} specify the passkey harness, kurtosis, quantization, and dead-head protocols; the base model is the publicly released softpick 340M checkpoint and the data is public FineWeb-Edu.
    \item[] Guidelines:
    \begin{itemize}
        \item The answer \answerNA{} means that the paper does not include experiments.
        \item If the paper includes experiments, a \answerNo{} answer to this question will not be perceived well by the reviewers: Making the paper reproducible is important, regardless of whether the code and data are provided or not.
        \item If the contribution is a dataset and\slash or model, the authors should describe the steps taken to make their results reproducible or verifiable. 
        \item Depending on the contribution, reproducibility can be accomplished in various ways. For example, if the contribution is a novel architecture, describing the architecture fully might suffice, or if the contribution is a specific model and empirical evaluation, it may be necessary to either make it possible for others to replicate the model with the same dataset, or provide access to the model. In general. releasing code and data is often one good way to accomplish this, but reproducibility can also be provided via detailed instructions for how to replicate the results, access to a hosted model (e.g., in the case of a large language model), releasing of a model checkpoint, or other means that are appropriate to the research performed.
        \item While NeurIPS does not require releasing code, the conference does require all submissions to provide some reasonable avenue for reproducibility, which may depend on the nature of the contribution. For example
        \begin{enumerate}
            \item If the contribution is primarily a new algorithm, the paper should make it clear how to reproduce that algorithm.
            \item If the contribution is primarily a new model architecture, the paper should describe the architecture clearly and fully.
            \item If the contribution is a new model (e.g., a large language model), then there should either be a way to access this model for reproducing the results or a way to reproduce the model (e.g., with an open-source dataset or instructions for how to construct the dataset).
            \item We recognize that reproducibility may be tricky in some cases, in which case authors are welcome to describe the particular way they provide for reproducibility. In the case of closed-source models, it may be that access to the model is limited in some way (e.g., to registered users), but it should be possible for other researchers to have some path to reproducing or verifying the results.
        \end{enumerate}
    \end{itemize}

\item {\bf Open access to data and code}
    \item[] Question: Does the paper provide open access to the data and code, with sufficient instructions to faithfully reproduce the main experimental results, as described in supplemental material?
    \item[] Answer: \answerNo{}
    \item[] Justification: The training data (FineWeb-Edu) and the base 340M checkpoint are publicly available and cited, but the code for the operator family is not released with this anonymized submission. The implementation is documented in enough detail to reproduce (Appendix~\ref{app:implementation}), including the exact kernel change that defines each operator, and we intend to release code with the final version.
    \item[] Guidelines:
    \begin{itemize}
        \item The answer \answerNA{} means that paper does not include experiments requiring code.
        \item Please see the NeurIPS code and data submission guidelines (\url{https://neurips.cc/public/guides/CodeSubmissionPolicy}) for more details.
        \item While we encourage the release of code and data, we understand that this might not be possible, so \answerNo{} is an acceptable answer. Papers cannot be rejected simply for not including code, unless this is central to the contribution (e.g., for a new open-source benchmark).
        \item The instructions should contain the exact command and environment needed to run to reproduce the results. See the NeurIPS code and data submission guidelines (\url{https://neurips.cc/public/guides/CodeSubmissionPolicy}) for more details.
        \item The authors should provide instructions on data access and preparation, including how to access the raw data, preprocessed data, intermediate data, and generated data, etc.
        \item The authors should provide scripts to reproduce all experimental results for the new proposed method and baselines. If only a subset of experiments are reproducible, they should state which ones are omitted from the script and why.
        \item At submission time, to preserve anonymity, the authors should release anonymized versions (if applicable).
        \item Providing as much information as possible in supplemental material (appended to the paper) is recommended, but including URLs to data and code is permitted.
    \end{itemize}

\item {\bf Experimental setting/details}
    \item[] Question: Does the paper specify all the training and test details (e.g., data splits, hyperparameters, how they were chosen, type of optimizer) necessary to understand the results?
    \item[] Answer: \answerYes{}
    \item[] Justification: Appendix~\ref{app:config} tabulates the architecture, optimizer (AdamW), learning-rate schedules and warmup, batch composition, precision, gradient clipping, context length, and step counts at each scale, and Appendix~\ref{app:eval} specifies the evaluation protocols and decoding settings.
    \item[] Guidelines:
    \begin{itemize}
        \item The answer \answerNA{} means that the paper does not include experiments.
        \item The experimental setting should be presented in the core of the paper to a level of detail that is necessary to appreciate the results and make sense of them.
        \item The full details can be provided either with the code, in appendix, or as supplemental material.
    \end{itemize}

\item {\bf Experiment statistical significance}
    \item[] Question: Does the paper report error bars suitably and correctly defined or other appropriate information about the statistical significance of the experiments?
    \item[] Answer: \answerNo{}
    \item[] Justification: Each kurtosis/quantization/dead-head comparison rests on a single checkpoint per operator per scale with no seeds, given the compute cost (roughly three GPU-days per 230M run); the paper states this limitation explicitly (Appendix~\ref{app:full-cost}, \ref{app:future}) and treats the closing gaps as two points consistent with a trend rather than a demonstrated effect. Passkey retrieval is reported over $n=500$ samples per cell with two seed offsets checked for softpick's 230M failure (Appendix~\ref{app:full-cost}).
    \item[] Guidelines:
    \begin{itemize}
        \item The answer \answerNA{} means that the paper does not include experiments.
        \item The authors should answer \answerYes{} if the results are accompanied by error bars, confidence intervals, or statistical significance tests, at least for the experiments that support the main claims of the paper.
        \item The factors of variability that the error bars are capturing should be clearly stated (for example, train/test split, initialization, random drawing of some parameter, or overall run with given experimental conditions).
        \item The method for calculating the error bars should be explained (closed form formula, call to a library function, bootstrap, etc.)
        \item The assumptions made should be given (e.g., Normally distributed errors).
        \item It should be clear whether the error bar is the standard deviation or the standard error of the mean.
        \item It is OK to report 1-sigma error bars, but one should state it. The authors should preferably report a 2-sigma error bar than state that they have a 96\% CI, if the hypothesis of Normality of errors is not verified.
        \item For asymmetric distributions, the authors should be careful not to show in tables or figures symmetric error bars that would yield results that are out of range (e.g., negative error rates).
        \item If error bars are reported in tables or plots, the authors should explain in the text how they were calculated and reference the corresponding figures or tables in the text.
    \end{itemize}

\item {\bf Experiments compute resources}
    \item[] Question: For each experiment, does the paper provide sufficient information on the computer resources (type of compute workers, memory, time of execution) needed to reproduce the experiments?
    \item[] Answer: \answerYes{}
    \item[] Justification: The compute paragraph in Appendix~\ref{app:config} states the hardware (two NVIDIA RTX A6000 48GB GPUs in DDP for the grid runs, a single A6000 for the anneal), the wall-clock cost (a 30k-step 230M run takes roughly three days), and the memory-driven choices (sequence length dropped to 2048 to fit on 48GB).
    \item[] Guidelines:
    \begin{itemize}
        \item The answer \answerNA{} means that the paper does not include experiments.
        \item The paper should indicate the type of compute workers CPU or GPU, internal cluster, or cloud provider, including relevant memory and storage.
        \item The paper should provide the amount of compute required for each of the individual experimental runs as well as estimate the total compute. 
        \item The paper should disclose whether the full research project required more compute than the experiments reported in the paper (e.g., preliminary or failed experiments that didn't make it into the paper). 
    \end{itemize}
    
\item {\bf Code of ethics}
    \item[] Question: Does the research conducted in the paper conform, in every respect, with the NeurIPS Code of Ethics \url{https://neurips.cc/public/EthicsGuidelines}?
    \item[] Answer: \answerYes{}
    \item[] Justification: The work uses only public data and a publicly released checkpoint, involves no human subjects or private data, and preserves anonymity (Appendix~\ref{app:ethics}); it conforms with the NeurIPS Code of Ethics.
    \item[] Guidelines:
    \begin{itemize}
        \item The answer \answerNA{} means that the authors have not reviewed the NeurIPS Code of Ethics.
        \item If the authors answer \answerNo, they should explain the special circumstances that require a deviation from the Code of Ethics.
        \item The authors should make sure to preserve anonymity (e.g., if there is a special consideration due to laws or regulations in their jurisdiction).
    \end{itemize}

\item {\bf Broader impacts}
    \item[] Question: Does the paper discuss both potential positive societal impacts and negative societal impacts of the work performed?
    \item[] Answer: \answerYes{}
    \item[] Justification: The ethics statement (Appendix~\ref{app:ethics}) notes that the contribution is a diagnostic analysis of an existing attention operator that introduces no new capability or data, so it carries no greater dual-use risk than the base architecture; the positive impact is toward more quantization-friendly, efficient attention.
    \item[] Guidelines:
    \begin{itemize}
        \item The answer \answerNA{} means that there is no societal impact of the work performed.
        \item If the authors answer \answerNA{} or \answerNo, they should explain why their work has no societal impact or why the paper does not address societal impact.
        \item Examples of negative societal impacts include potential malicious or unintended uses (e.g., disinformation, generating fake profiles, surveillance), fairness considerations (e.g., deployment of technologies that could make decisions that unfairly impact specific groups), privacy considerations, and security considerations.
        \item The conference expects that many papers will be foundational research and not tied to particular applications, let alone deployments. However, if there is a direct path to any negative applications, the authors should point it out. For example, it is legitimate to point out that an improvement in the quality of generative models could be used to generate Deepfakes for disinformation. On the other hand, it is not needed to point out that a generic algorithm for optimizing neural networks could enable people to train models that generate Deepfakes faster.
        \item The authors should consider possible harms that could arise when the technology is being used as intended and functioning correctly, harms that could arise when the technology is being used as intended but gives incorrect results, and harms following from (intentional or unintentional) misuse of the technology.
        \item If there are negative societal impacts, the authors could also discuss possible mitigation strategies (e.g., gated release of models, providing defenses in addition to attacks, mechanisms for monitoring misuse, mechanisms to monitor how a system learns from feedback over time, improving the efficiency and accessibility of ML).
    \end{itemize}
    
\item {\bf Safeguards}
    \item[] Question: Does the paper describe safeguards that have been put in place for responsible release of data or models that have a high risk for misuse (e.g., pre-trained language models, image generators, or scraped datasets)?
    \item[] Answer: \answerNA{}
    \item[] Justification: The paper releases no data or models; it studies small (130M--340M) research models on public data, which pose no high risk for misuse (Appendix~\ref{app:ethics}).
    \item[] Guidelines:
    \begin{itemize}
        \item The answer \answerNA{} means that the paper poses no such risks.
        \item Released models that have a high risk for misuse or dual-use should be released with necessary safeguards to allow for controlled use of the model, for example by requiring that users adhere to usage guidelines or restrictions to access the model or implementing safety filters. 
        \item Datasets that have been scraped from the Internet could pose safety risks. The authors should describe how they avoided releasing unsafe images.
        \item We recognize that providing effective safeguards is challenging, and many papers do not require this, but we encourage authors to take this into account and make a best faith effort.
    \end{itemize}

\item {\bf Licenses for existing assets}
    \item[] Question: Are the creators or original owners of assets (e.g., code, data, models), used in the paper, properly credited and are the license and terms of use explicitly mentioned and properly respected?
    \item[] Answer: \answerYes{}
    \item[] Justification: All existing assets are cited to their original work: the softpick 340M checkpoint~\citep{zuhri2025softpick}, the FineWeb-Edu dataset~\citep{penedo2024fineweb}, and the software stack (flash-linear-attention, PyTorch, Triton, Transformers) in Appendix~\ref{app:config}, and they are used consistently with their open-source terms.
    \item[] Guidelines:
    \begin{itemize}
        \item The answer \answerNA{} means that the paper does not use existing assets.
        \item The authors should cite the original paper that produced the code package or dataset.
        \item The authors should state which version of the asset is used and, if possible, include a URL.
        \item The name of the license (e.g., CC-BY 4.0) should be included for each asset.
        \item For scraped data from a particular source (e.g., website), the copyright and terms of service of that source should be provided.
        \item If assets are released, the license, copyright information, and terms of use in the package should be provided. For popular datasets, \url{paperswithcode.com/datasets} has curated licenses for some datasets. Their licensing guide can help determine the license of a dataset.
        \item For existing datasets that are re-packaged, both the original license and the license of the derived asset (if it has changed) should be provided.
        \item If this information is not available online, the authors are encouraged to reach out to the asset's creators.
    \end{itemize}

\item {\bf New assets}
    \item[] Question: Are new assets introduced in the paper well documented and is the documentation provided alongside the assets?
    \item[] Answer: \answerNA{}
    \item[] Justification: The paper does not release new datasets or model checkpoints; its contribution is a diagnostic analysis and a family of attention operators described in the text (Section~\ref{sec:problem-formulation}, Appendix~\ref{app:implementation}).
    \item[] Guidelines:
    \begin{itemize}
        \item The answer \answerNA{} means that the paper does not release new assets.
        \item Researchers should communicate the details of the dataset\slash code\slash model as part of their submissions via structured templates. This includes details about training, license, limitations, etc. 
        \item The paper should discuss whether and how consent was obtained from people whose asset is used.
        \item At submission time, remember to anonymize your assets (if applicable). You can either create an anonymized URL or include an anonymized zip file.
    \end{itemize}

\item {\bf Crowdsourcing and research with human subjects}
    \item[] Question: For crowdsourcing experiments and research with human subjects, does the paper include the full text of instructions given to participants and screenshots, if applicable, as well as details about compensation (if any)?
    \item[] Answer: \answerNA{}
    \item[] Justification: The paper involves no crowdsourcing and no research with human subjects (Appendix~\ref{app:ethics}).
    \item[] Guidelines:
    \begin{itemize}
        \item The answer \answerNA{} means that the paper does not involve crowdsourcing nor research with human subjects.
        \item Including this information in the supplemental material is fine, but if the main contribution of the paper involves human subjects, then as much detail as possible should be included in the main paper. 
        \item According to the NeurIPS Code of Ethics, workers involved in data collection, curation, or other labor should be paid at least the minimum wage in the country of the data collector. 
    \end{itemize}

\item {\bf Institutional review board (IRB) approvals or equivalent for research with human subjects}
    \item[] Question: Does the paper describe potential risks incurred by study participants, whether such risks were disclosed to the subjects, and whether Institutional Review Board (IRB) approvals (or an equivalent approval/review based on the requirements of your country or institution) were obtained?
    \item[] Answer: \answerNA{}
    \item[] Justification: The paper involves no research with human subjects, so IRB approval is not applicable (Appendix~\ref{app:ethics}).
    \item[] Guidelines:
    \begin{itemize}
        \item The answer \answerNA{} means that the paper does not involve crowdsourcing nor research with human subjects.
        \item Depending on the country in which research is conducted, IRB approval (or equivalent) may be required for any human subjects research. If you obtained IRB approval, you should clearly state this in the paper. 
        \item We recognize that the procedures for this may vary significantly between institutions and locations, and we expect authors to adhere to the NeurIPS Code of Ethics and the guidelines for their institution. 
        \item For initial submissions, do not include any information that would break anonymity (if applicable), such as the institution conducting the review.
    \end{itemize}

\item {\bf Declaration of LLM usage}
    \item[] Question: Does the paper describe the usage of LLMs if it is an important, original, or non-standard component of the core methods in this research? Note that if the LLM is used only for writing, editing, or formatting purposes and does \emph{not} impact the core methodology, scientific rigor, or originality of the research, declaration is not required.
    \item[] Answer: \answerNA{}
    \item[] Justification: LLMs were used only for grammar fixes and refining the writing. They played no role in the core methodology, scientific rigor, or originality of the research, so a declaration is not required.
    \item[] Guidelines:
    \begin{itemize}
        \item The answer \answerNA{} means that the core method development in this research does not involve LLMs as any important, original, or non-standard components.
        \item Please refer to our LLM policy in the NeurIPS handbook for what should or should not be described.
    \end{itemize}

\end{enumerate}

\end{document}